\newcommand{\G}{\mathcal{G}}
\newcommand{\hg}{\mathcal{H}}
\newcommand{\V}{\mathcal{V}}
\newcommand{\E}{\mathcal{E}}
\newcommand{\set}{\mathcal{S}}
\newcommand{\R}{\mathbb{R}}
\newcommand{\x}{\mathbf{x}}
\newcommand{\y}{\mathbf{y}}
\newcommand{\z}{\mathbf{z}}
\newcommand{\volume}{\mathrm{vol}}
\newcommand{\cut}{\mathrm{cut}}
\newcommand{\sign}{\mathrm{sgn}}
\newcommand{\lo}{\triangle}
\newcommand{\ncc}{\mathrm{NCC}}
\DeclareMathOperator*{\argmin}{argmin}
\newtheorem{definition}{Definition}
\newtheorem{theorem}{Theorem} 
\newtheorem{lemma}{Lemma}
\renewcommand{\algorithmiccomment}[1]{\bgroup\hfill~#1\egroup}
\def\keyFont{\fontsize{9}{11}\helveticabold }
\def\firstAuthorLast{} 
\def\Authors{Yu Zhu\,$^{*}$ and Santiago Segarra}
\begin{document}
\onecolumn
\firstpage{1}

\title[Hypergraphs with Edge-Dependent Vertex Weights]{Hypergraphs with Edge-Dependent Vertex Weights: $p$-Laplacians and Spectral Clustering} 

\author[\firstAuthorLast ]{\Authors} 
\address{} 
\correspondance{} 

\extraAuth{}

\maketitle

\begin{abstract}

We study $p$-Laplacians and spectral clustering for a recently proposed hypergraph model that incorporates edge-dependent vertex weights (EDVW).
These weights can reflect different importance of vertices within a hyperedge, thus conferring the hypergraph model higher expressivity and flexibility.
By constructing submodular EDVW-based splitting functions, we convert hypergraphs with EDVW into submodular hypergraphs for which the spectral theory is better developed.
In this way, existing concepts and theorems such as $p$-Laplacians and Cheeger inequalities proposed under the submodular hypergraph setting can be directly extended to hypergraphs with EDVW.
For submodular hypergraphs with EDVW-based splitting functions, we propose an efficient algorithm to compute the eigenvector associated with the second smallest eigenvalue of the hypergraph $1$-Laplacian.
We then utilize this eigenvector to cluster the vertices, achieving higher clustering accuracy than traditional spectral clustering based on the $2$-Laplacian.
More broadly, the proposed algorithm works for all submodular hypergraphs that are graph reducible.
Numerical experiments using real-world data demonstrate the effectiveness of combining spectral clustering based on the $1$-Laplacian and EDVW.

\keyFont{ \section{Keywords:} submodular hypergraphs, $p$-Laplacian, spectral clustering, edge-dependent vertex weights, decomposable submodular function minimization} 
\end{abstract}

\section{Introduction}\label{s:intro}

Spectral clustering makes use of eigenvalues and eigenvectors of graph Laplacians to group vertices in a graph.
It is one of the most popular clustering methods due to its generality, efficiency, and strong theoretical basis.
Standard graph Laplacians were first adopted to obtain relaxations of balanced graph cut criteria~\citep{von2007tutorial}.
Later, these were generalized to $p$-Laplacians, which are able to provide better approximations of the Cheeger constant~\citep{amghibech2003eigenvalues, buhler2009spectral}.
Especially, the second smallest eigenvalue of the $1$-Laplacian is identical to the Cheeger constant and the partition that achieves the optimal Cheeger cut can be obtained by thresholding the corresponding eigenvector~\citep{szlam2010total}.
      
Graphs are widely used to model pairwise interactions, but in many real-world applications the entities engage in higher-order relationships~\citep{benson2016higher, schaub2021signal}.
For instance, in co-authorship networks multiple authors may interact in writing an article together~\citep{chitra2019random}.
In an e-commerce system, multiple customers can be associated if they once purchased the same product~\citep{li2018tail}.
In text mining, multiple documents are related to each other if they contain the same keywords~\citep{hayashi2020hypergraph, zhu2021co}.
Such multi-way relations can be modeled by hypergraphs, where the notion of an edge is generalized to a hyperedge that can connect more than two vertices.
       
In graphs, there is only one way to cut an edge, thus, a scalar weight is enough to characterize the cut.
But in hypergraphs, there may exist multiple ways to split a hyperedge.
Consequently, a splitting function $w_e$ is introduced for each hyperedge $e$ in the hypergraph, assigning a cost to every possible cut of $e$. 
For any $\set\subseteq e$, $w_e(\set)$ indicates the penalty of partitioning $e$ into $\set$ and $e\setminus\set$.
In particular, when $w_e$ is submodular for every hyperedge $e$, the corresponding model is termed as a submodular hypergraph which has desirable mathematical properties, making it convenient for theoretical analysis~\citep{li2017inhomogeneous, li2018submodular}.
A series of results in graph spectral theory including $p$-Laplacians, nodal domain theorems, and Cheeger inequalities have been generalized to submodular hypergraphs~\citep{li2018submodular, yoshida2019cheeger}. 

The choice of hyperedge splitting functions has a large practical effect on the hypergraph clustering performance.
There are mainly two types of splitting functions in existing work.
One is the so-called all-or-nothing splitting function in which an identical penalty is charged if the hyperedge is split regardless of how its vertices are separated~\citep{hein2013total}.
Another slightly more general type is the class of cardinality-based splitting functions where the splitting penalty depends only on the number of vertices placed on each side of the split~\citep{veldt2020hypergraph}.

The limitation of existing splitting functions is that they treat all the vertices in a hyperedge equally while in practice these vertices may have different degrees of contribution to the hyperedge.
Such information can be captured by edge-dependent vertex weights (EDVW): 
Every vertex $v$ is associated with a weight $\gamma_e(v)$ for each incident hyperedge $e$ that reflects the contribution or importance of $v$ to $e$~\citep{chitra2019random}.
Going back to the aforementioned examples, EDVW can be used to model the author positions in a co-authored article, the quantity of a product bought by a customer, as well as the frequency of a word in a document.

Spectral theory on the hypergraph model with EDVW is much less developed than on submodular hypergraphs.
Existing works studying hypergraphs with EDVW have only focused on random walk-based Laplacian matrices~\citep{chitra2019random, hayashi2020hypergraph, zhu2021co}, thus raising the question: \emph{How to define non-linear $p$-Laplacians for the hypergraph model that incorporates EDVW?}
Our basic idea for solving this problem is to convert a hypergraph with EDVW into a submodular hypergraph then $p$-Laplacians and related theorems developed for submodular hypergraphs can be directly leveraged.
Based on our earlier work~\citep{zhu2022hypergraphs}, the model conversion can be achieved by defining submodular EDVW-based splitting functions in the form of $w_e(\set)=g_e(\sum_{v\in\set}\gamma_e(v))$ where $g_e$ is a concave function and the splitting penalty $w_e(\set)$ is dependent only on the sum of EDVW in $\set$.
Moreover, hypergraphs with such splitting functions are proved to be graph reducible, meaning that there exists some graph sharing the same cut properties~\citep{zhu2022ans}.

Since the $1$-Laplacian provides the tightest approximation of the Cheeger constant, a follow-up question is: \emph{How to apply $1$-spectral clustering to submodular hypergraphs with EDVW-based splitting functions?} 
To this end, we develop an algorithm to compute the second eigenvector of the $1$-Laplacian for EDVW-based submodular hypergraphs based on the inverse power method (IPM).
The IPM was initially proposed for undirected graphs~\citep{hein2010inverse}, then generalized to submodular hypergraphs with cardinality-based splitting functions~\citep{li2018submodular}.  
A key to the success of IPM is an efficient solution to the inner-loop optimization problem in it.
In this paper, we derive an equivalent definition of graph reducibility based on which we further propose an efficient solution to the inner problem that works for all graph reducible submodular hypergraphs including those with EDVW.
The proposed solution can also be used to solve submodular function minimization (SFM) problems~\citep{bach2013learning} when the submodular function can be represented as sums of concave functions applied to modular functions.

The major contributions of this paper can be summarized as follows:
\begin{enumerate}
\item[(1)] We present an equivalent definition of graph reducibility in terms of the Lov\'asz extension of the cut function (see Theorem~\ref{THM:REDUCIBLE}), which is helpful for understanding the relations between graph Laplacians and hypergraph Laplacians.
\item[(2)] We propose an algorithm to compute the eigenvector of the $1$-Laplacian associated with the second smallest eigenvalue for all graph reducible submodular hypergraphs including those with EDVW-based splitting functions, and use the eigenvector for $1$-spectral clustering.
\item[(3)] We validate the effectiveness of the proposed algorithm which leverages both of EDVW and $1$-spectral clustering via numerical experiments on real-world datasets.
\end{enumerate}

\noindent\textbf{Paper outline.}
The rest of this paper is structured as follows.
Preliminary mathematical concepts and submodular hypergraphs are reviewed in Section~\ref{s:pre}.
Section~\ref{s:sumb_hg_edvws} introduces the hypergraph model with EDVW and shows how to convert it to graph reducible submodular hypergraphs by constructing EDVW-based splitting functions. 
The section also presents two equivalent definitions for graph reducibility.
The proposed $1$-spectral clustering algorithm is described in Section~\ref{s:1_spec_cluster}.
Section~\ref{s:exp} presents experimental results.
The relation between hypergraph Laplacians defined in different ways and the application of the proposed algorithm in SFM are discussed in Section~\ref{s:discuss}.
Closing remarks are included in Section~\ref{s:conclude}.

\noindent\textbf{Notation.}
For a vector $\x$ and a set $\set$, $\x_{\set}$ denotes the vector formed by the entries of $\x$ indexed by $\set$ and $\x(\set)=\sum_{v\in\set} x_v$.
The operator $\mathcal{P}_{a,b}(\x)$ projects every entry of $\x$ onto the range $[a,b]$.
Throughout the paper we assume that the considered hypergraphs are connected.

\section{Preliminaries}\label{s:pre}

\subsection{Mathematical preliminaries}\label{ss:math_pre}

For a finite set $\V$, a set function $F:2^\V\to\R$ is called submodular if $F(\set_1)+F(\set_2) \geq F(\set_1\cup\set_2)+F(\set_1\cap\set_2)$ for every $\set_1,\set_2\subseteq\V$.
Considering a set function $F:2^\V\to\R$ such that $F(\emptyset)=0$ where $\V=[N]=\{1,2,\cdots,N\}$, its Lov\'asz extension $f:\R^N\to\R$ is defined as follows.
For any $\x\in\R^N$, sort its entries in non-increasing order $x_{i_1} \geq x_{i_2} \geq \cdots \geq x_{i_N}$, where $(i_1,i_2,\cdots,i_N)$ is a permutation of $(1,2,\cdots,N)$, and set
	\begin{equation}\label{e:lovasz}
		f(\x) = \sum\nolimits_{j=1}^{N-1} F(\set_j)(x_{i_j}-x_{i_{j+1}}) + F(\V)x_{i_N},
	\end{equation}
where $\set_j=\{i_1,\cdots,i_j\}$ for $1\leq j<N$.
A set function $F$ is submodular if and only if its Lov\'asz extension $f$ is convex~\citep{lovasz1983submodular}.
For any $\set\subseteq\V$, $F(\set)=f(\mathbf{1}_\set)$ where $\mathbf{1}_\set$ is the indicator vector of $\set$.
If $F(\V)=0$, $f(a\x+b\mathbf{1})=af(\x)$ for any $a\in\R_{\geq 0}, b\in\R$.
More properties of submodular functions can be found in Appendix~\ref{s:proof_pre}.

\subsection{Submodular hypergraphs}\label{ss:subm_hg}

Let $\hg=(\V,\E,\mu,\{w_e\})$ denote a submodular hypergraph~\citep{li2018submodular} where $\V=[N]$ is the vertex set and $\E$ is the set of hyperedges. 
The function $\mu:\V\to\R_{+}$ assigns positive weights to vertices.
Each hyperedge $e\in\E$ is associated with a submodular splitting function $w_e:2^e\to\R_{\geq 0}$ that assigns non-negative penalties to every possible split of $e$.
Moreover, $w_e$ is required to satisfy $w_e(\emptyset)=0$ and be symmetric so that $w_e(\set)=w_e(e\setminus\set)$ for any $\set\subseteq e$.
The domain of $w_e$ can be extended from $2^e$ to $2^{\V}$ by setting $w_e(\set)=w_e(\set\cap e)$ for any $\set\subseteq\V$, guaranteeing that the submodularity is maintained.

A cut is a partition of the vertex set $\V$ into two disjoint, non-empty subsets denoted by $\set$ and its complement $\V\setminus\set$.
The weight of the cut is defined as the sum of splitting penalties associated with each hyperedge~\citep{li2018submodular, veldt2020hypergraph}, i.e., 
	\begin{equation}\label{e:hg_cut}
		\cut_{\hg}(\set) = \sum\nolimits_{e\in\E} w_e(\set).
	\end{equation}
The normalized Cheeger cut (NCC) is defined as
	\begin{equation}\label{e:hg_ncc}
		\ncc(\set) = \frac{\cut_{\hg}(\set)}{\min\{\volume(\set),\volume(\V\setminus\set)\}}
	\end{equation}
where $\volume(\set)=\sum_{v\in\set}\mu(v)$ denotes the volume of $\set$.
The $2$-way Cheeger constant is defined as the minimum NCC over all non-empty subsets of $\V$ except itself, i.e., 
	\begin{equation}\label{e:2_cheeger_constant}
		h_2 = \min_{\emptyset\subset\set\subset\V} \ncc(\set).
	\end{equation}
The solution to~\eqref{e:2_cheeger_constant} provides an optimal partitioning in the sense that we obtain two clusters which are balanced in terms of volume and loosely connected as captured by a small cut weight.
In this paper, we adopt the minimization of NCC as our objective.
There exist other clustering measures such as ratio cut, normalized cut and ratio Cheeger cut, which are closely related~\citep{von2007tutorial, buhler2009spectral}.

Optimally solving~\eqref{e:2_cheeger_constant} has been shown to be NP-hard for graphs~\citep{wagner1993between}, let alone hypergraphs.
In spectral graph theory, Cheeger inequalities are derived to bound and approximate the Cheeger constant using graph $p$-Laplacians~\citep{amghibech2003eigenvalues, buhler2009spectral, szlam2010total, chang2016spectrum, chang2017nodal, tudisco2018nodal}.
The results have been generalized to submodular hypergraphs~\citep{li2018submodular, yoshida2019cheeger}.
The $p$-Laplacian $\lo_p$ of a submodular hypergraph is defined as an operator that, for all $\x\in\R^N$, induces
	\begin{equation}\label{e:hg_Qp}
		\langle\x,\lo_p(\x)\rangle = \sum\nolimits_{e\in\E} \vartheta_e f_e(\x)^p \triangleq Q_p(\x),
	\end{equation}
where $\vartheta_e=\max_{\set\subseteq e}w_e(\set)$ and $f_e$ is the Lov\'asz extension of the normalized splitting function $\vartheta_e^{-1}w_e$.
Notice that $\lo_p$ can be alternatively defined in terms of the subdifferential of $f_e$~\citep{li2018submodular}, while we keep the definition~\eqref{e:hg_Qp} since it is more instrumental to our development.  
In particular, when $p=1$, $Q_1(\x)$ turns out to be the Lov\'asz extension of the cut function defined in~\eqref{e:hg_cut}.
It has been proved that the second smallest eigenvalue of the $1$-Laplacian is identical to the Cheeger constant $h_2$~\citep{li2018submodular}.

\section{EDVW-based Submodular Hypergraphs}\label{s:sumb_hg_edvws}

\subsection{The hypergraph model with EDVW}\label{ss:hg_edvws}

Let $\hg=(\V,\E,\mu,\kappa,\{\gamma_e\})$ represent a hypergraph with EDVW~\citep{chitra2019random} where $\V$, $\E$, and $\mu$ respectively denote the vertex set, the hyperedge set, and positive vertex weights.
The function $\kappa:\E\to\R_{+}$ assigns positive weights to hyperedges, and those weights can reflect the strength of connection. 
Each hyperedge $e\in\E$ is associated with a function $\gamma_e:\V\to\R_{\geq 0}$ to assign edge-dependent vertex weights.
For $v\in e$, $\gamma_e(v)$ is positive and measures the importance of the vertex $v$ to the hyperedge $e$; for $v\notin e$, $\gamma_e(v)$ is set to zero.
For convenience, we define $\gamma_e(\set)=\sum_{v\in\set}\gamma_e(v)$.

The motivation of introducing EDVW is to enable the hypergraph model to describe the cases when the vertices in the same hyperedge contribute differently to this hyperedge.
This information cannot be captured by hypergraphs adopting the all-or-nothing or cardinality-based splitting functions and is also hard to be directly described by submodular hypergraphs, but it can be conveniently represented via EDVW.
For example, \cite{chitra2019random} studies the application of ranking authors in an academic citation network where authors and papers are respectively modeled as vertices and hyperedges.
For a paper $e$ and any author $v$ of the paper, the corresponding EDVW $\gamma_e(v)$ is set to $2$ if $v$ is the first or last author, otherwise the weight is set to $1$.

\subsection{Building submodular hypergraphs from EDVW}\label{ss:w_edvws}

In order to effectively handle EDVW while still leveraging existing results obtained for submodular hypergraphs, we consider the conversion from a hypergraph with EDVW $\hg=(\V,\E,\mu,\kappa,\{\gamma_e\})$ to a submodular hypergraph $\hg=(\V,\E,\mu,\{w_e\})$.
The basic idea is to keep $\V$, $\E$, and $\mu$ unchanged, and construct submodular splitting functions $\{w_e\}$ from EDVW $\{\gamma_e\}$ and hyperedge weights $\kappa$.
After such a transformation, we can directly extend concepts such as $p$-Laplacians and related theorems proposed for the submodular hypergraph model~\citep{li2018submodular} to the EDVW-based hypergraph model. 

In our preliminary works~\citep{zhu2022hypergraphs, zhu2022ans}, we have proposed a class of submodular EDVW-based splitting functions in the following form
	\begin{equation}\label{e:w_edvws}
		w_e(\set) = h_e(\kappa(e)) \cdot g_e(\gamma_e(\set)),
	\end{equation} 
where $h_e: \R_{+}\to\R_{+}$ is an arbitrary function and $g_e: [0, \gamma_e(e)]\to\R_{\geq 0}$ is concave, symmetric with respect to $\gamma_e(e)/2$, and satisfies $g_e(0)=0$.
The resulting $w_e$ is a valid splitting function that is non-negative, submodular, symmetric, and satisfies $w_e(\emptyset)=0$.
In practice, it is reasonable to select a non-decreasing function for $h_e$ such as $h_e(x)=1$ and $h_e(x)=x$ since a larger hyperedge weight $\kappa(e)$ is expected to lead to a larger splitting penalty for the same split of the hyperedge.
Possible choices of $g_e$ include $g_e(x)=x\cdot(\gamma_e(e)-x)$, $g_e(x)=\min\{x,\gamma_e(e)-x\}$, and $g_e(x)=\min\{x,\gamma_e(e)-x,b\}$ where $b$ is a positive parameter. 
Also notice that for trivial EDVW, namely $\gamma_e(v)=1$ for all $v\in e$, the splitting functions defined as~\eqref{e:w_edvws} reduce to cardinality-based ones~\citep{veldt2020hypergraph}.

\subsection{Hypergraph-to-graph reductions}\label{ss:hg2g}

Submodular hypergraphs with splitting functions defined as~\eqref{e:w_edvws} have a desirable property that they are graph reducible.
In other words, they can project onto some graph which shares identical cut properties.
Following~\citep{veldt2020hypergraph}, we consider the reduction of a hypergraph to a (possibly directed) graph with a potentially augmented vertex set.
For a directed graph (digraph) $\G$ with vertex set $\V_\G$ and weighted adjacency matrix $\mathbf{A}$ whose entry $A_{uv}$ denotes the weight of the directed edge pointing from $u$ to $v$, its cut function is defined as 
	\begin{equation}\label{e:g_cut}
		\cut_\G(\set) = \sum\nolimits_{u\in\set, v\in\V_\G\setminus\set} A_{uv}.
	\end{equation}
In the following, we state a formal definition for graph reducibility in terms of cut weights, which is a variant of the definition in terms of hyperedge splitting functions stated in~\citep{veldt2020hypergraph}.
\begin{definition}
For a submodular hypergraph $\hg$ with vertex set $\V$, we say that its cut function $\cut_{\hg}(\set)$ is reducible to the cut function of a directed graph $\G$ with vertex set $\V_\G=\V\cup\bar{\V}$ where $\bar{\V}$ is a set of auxiliary vertices if the following equality holds 
	\begin{equation}\label{e:reduce1}
		\cut_{\hg}(\set) = \min_{\mathcal{T}\subseteq\bar{\V}} \cut_\G(\set\cup\mathcal{T}), \quad \forall \set\subseteq\V.
	\end{equation}
\end{definition}

In the following theorem, we show an equivalent condition for graph reducibility regarding the Lov\'asz extension of the cut function, which is beneficial for our later development of the $1$-spectral clustering algorithm. 
The Lov\'asz extension of the graph cut function can be written as
	\begin{equation}\label{e:g_cut_lovasz}
		Q_1^{(g)}(\y) = \sum\nolimits_{u,v\in\V_\G} A_{uv} \max\{y_u-y_v, 0\}
	\end{equation}
where $\y$ is a vector of length $\V_\G$.
\begin{theorem}\label{THM:REDUCIBLE}
The equality presented in~\eqref{e:reduce1} is equivalent to
	\begin{equation}\label{e:reduce2}
		Q_1(\x) = \min_{\bar{\x}\in\R^M} Q_1^{(g)}(\y), \quad \forall \,  \x\in\R^N
	\end{equation}
where $Q_1(\x)$ and $Q_1^{(g)}(\y)$ are respectively defined in \eqref{e:hg_Qp} and \eqref{e:g_cut_lovasz}, $|\V|=N$, $|\bar{\V}|=M$, and $\y\in\R^{N+M}$ is composed of $\y_\V=\x$ and $\y_{\bar{\V}}=\bar{\x}$.
\end{theorem}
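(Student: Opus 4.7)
The plan is to prove the two implications separately via the Choquet integral representation of the Lov\'asz extension. Because $w_e$ is symmetric with $w_e(\emptyset)=0$, one has $\cut_{\hg}(\V)=\cut_{\hg}(\emptyset)=0$, and trivially $\cut_\G(\V_\G)=0$, so by the formulas of Section~\ref{ss:math_pre},
\begin{equation*}
Q_1(\x)=\int_{-\infty}^{\infty} \cut_{\hg}(\{v\in\V:x_v>t\})\,dt,\qquad Q_1^{(g)}(\y)=\int_{-\infty}^{\infty} \cut_\G(\{v\in\V_\G:y_v>t\})\,dt.
\end{equation*}
This converts \eqref{e:reduce2} into a threshold-by-threshold statement closely related to \eqref{e:reduce1}.

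\emph{Direction $\eqref{e:reduce1}\Rightarrow\eqref{e:reduce2}$.} For any $\bar{\x}\in\R^M$, decompose the super-level set of $\y=(\x,\bar{\x})$ at threshold $t$ as $\set(t)\cup\mathcal{T}(t)$ with $\set(t)\subseteq\V$ and $\mathcal{T}(t)\subseteq\bar{\V}$. Pointwise application of \eqref{e:reduce1} gives $\cut_{\hg}(\set(t))\le\cut_\G(\set(t)\cup\mathcal{T}(t))$, so $Q_1(\x)\le\min_{\bar{\x}}Q_1^{(g)}(\y)$. For the reverse inequality I construct $\bar{\x}^*$ whose strict super-level sets $\mathcal{T}^*(t)$ attain the minimum in \eqref{e:reduce1} for $\set(t)$ at every $t$. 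Since $\{\set(t)\}_t$ forms a finite monotone chain, the crux is a monotone-minimizer selection: given $\set\subseteq\set'$ and minimizers $\mathcal{T},\mathcal{T}'$ of $\cut_\G(\set\cup\cdot)$ and $\cut_\G(\set'\cup\cdot)$ respectively, submodularity of $\cut_\G$ applied to $A=\set\cup\mathcal{T}$ and $B=\set'\cup\mathcal{T}'$ forces $\cut_\G(A)+\cut_\G(B)=\cut_\G(A\cup B)+\cut_\G(A\cap B)$, whence $\mathcal{T}\cap\mathcal{T}'$ is a minimizer for $\set$ and $\mathcal{T}\cup\mathcal{T}'$ for $\set'$; replacing $\mathcal{T}'$ by $\mathcal{T}\cup\mathcal{T}'$ nests the two. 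Iterating along the level chain of $\x$ yields a nested family of minimizers, and $\bar{\x}^*$ is recovered by setting each $\bar{x}^*_v$ equal to the largest level of $\x$ at which $v$ still belongs to the corresponding minimizer.

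\emph{Direction $\eqref{e:reduce2}\Rightarrow\eqref{e:reduce1}$.} Evaluating \eqref{e:reduce2} at $\x=\mathbf{1}_{\set}$ gives $\cut_{\hg}(\set)=Q_1(\mathbf{1}_\set)=\min_{\bar{\x}}Q_1^{(g)}(\mathbf{1}_\set,\bar{\x})$, so it suffices to show $\min_{\bar{\x}\in\R^M}Q_1^{(g)}(\mathbf{1}_\set,\bar{\x})=\min_{\mathcal{T}\subseteq\bar{\V}}\cut_\G(\set\cup\mathcal{T})$. The inequality ``$\le$'' is immediate by choosing $\bar{\x}=\mathbf{1}_\mathcal{T}$. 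For ``$\ge$'', I argue via a clipping step: since the $\V$-coordinates of $\y$ lie in $\{0,1\}$, a termwise inspection of \eqref{e:g_cut_lovasz} shows that replacing $\bar{\x}$ by $\mathcal{P}_{0,1}(\bar{\x})$ never increases $Q_1^{(g)}(\mathbf{1}_\set,\bar{\x})$; and for $\bar{\x}\in[0,1]^M$ the Choquet representation collapses to $\int_0^1\cut_\G(\set\cup\mathcal{T}(t))\,dt\ge\min_\mathcal{T}\cut_\G(\set\cup\mathcal{T})$. The main obstacle is the monotone-minimizer step in the forward direction: the submodular exchange identity is the algebraic engine, but one still has to iterate it along the entire super-level chain of $\x$ and verify that the resulting piecewise-constant vector has the prescribed strict super-level sets for almost every threshold.
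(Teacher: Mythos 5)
Your proposal is correct in substance but follows a genuinely different route from the paper. The paper's forward direction simply invokes quoted results on partial minimization of Lov\'asz extensions (Propositions 3.7 and B.3 of Bach, packaged as Lemmas in Appendix~\ref{s:proof_pre}), and its backward direction builds the auxiliary submodular function $F(\mathcal{T})=\cut_\G(\set\cup\mathcal{T})-\cut_\G(\set)$, computes its Lov\'asz extension on $[0,1]^M$, and chains together the set-versus-continuous minimization equivalence with a clipping lemma. You instead work throughout with the Choquet level-set representation (valid here because $\cut_{\hg}(\V)=\cut_\G(\V_\G)=0$), which makes the forward direction self-contained: the ``$\le$'' half is a pointwise application of~\eqref{e:reduce1} over thresholds, and the ``$\ge$'' half is an explicit construction of an optimal $\bar{\x}^*$ via the lattice of minimizers of a submodular function (your exchange identity argument is the standard fact that the minimizers of $\cut_\G(\set\cup\cdot)$ can be chosen monotonically in $\set$, and your verification that the submodular inequality collapses to equality is sound). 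Your backward direction is essentially the paper's argument recast in integral form, with the clipping step proved by termwise inspection rather than cited. What your approach buys is transparency --- it exhibits the minimizing $\bar{\x}^*$ rather than appealing to black-box lemmas --- at the cost of some bookkeeping. Two details you should nail down when writing it up: (i) state explicitly that the directed cut function~\eqref{e:g_cut} with non-negative weights is submodular, since this is the engine of the exchange argument; and (ii) in constructing $\bar{\x}^*$, assign every auxiliary vertex a value in $[\min_v x_v,\max_v x_v]$ (e.g., vertices outside the largest minimizer get the value $\min_v x_v$), so that for thresholds below $\min_v x_v$ the super-level set is all of $\V_\G$ and for thresholds at or above $\max_v x_v$ it is empty, making both tails of the integral vanish and the two Choquet integrals agree for almost every $t$.
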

\begin{proof}
The equivalence between~\eqref{e:reduce1} and~\eqref{e:reduce2} is proved in Appendix~\ref{s:proof_reducible}.
\end{proof}

It has been shown in~\citep{veldt2020hypergraph} that all hypergraphs with submodular cardinality-based splitting functions are graph reducible. 
Our earlier work~\citep{zhu2022ans} has generalized the conclusion to hypergraphs with submodular EDVW-based splitting functions.
In the following section, we propose a $1$-spectral clustering algorithm for all submodular hypergraphs that are graph reducible including those EDVW-based ones, which are the focus of this paper.

\begin{algorithm}[t]
\caption{$1$-spectral clustering for hypergraphs with EDVW}\label{alg:spec_cluster}
\begin{algorithmic}[1]
\vspace{0.2em}
\STATE \textbf{Input:} hypergraph with EDVW $\hg=(\V,\E,\mu,\kappa,\{\gamma_e\})$
\vspace{0.2em}
\STATE Convert $\hg$ to a submodular hypergraph by constructing submodular splitting functions based on~\eqref{e:w_edvws}
\vspace{0.2em}
\STATE Compute the second eigenvector of the hypergraph $1$-Laplacian via the minimization of $R_1(\x)$ in~\eqref{e:hg_R1}
\vspace{0.2em}
\STATE Threshold the obtained eigenvector to get the bipartition of $\V$ where we choose the threshold value as the one that minimizes the NCC in~\eqref{e:hg_ncc}
\vspace{0.2em}
\end{algorithmic}
\end{algorithm}

\section{Spectral Clustering based on the $1$-Laplacian}\label{s:1_spec_cluster}

\subsection{IPM-based $1$-spectral clustering}

We study spectral clustering algorithms for EDVW-based submodular hypergraphs leveraging the $1$-Laplacian.
As mentioned in Section~\ref{ss:subm_hg}, for submodular hypergraphs the Cheeger constant $h_2$ is equal to the second smallest eigenvalue $\lambda_2$ of the $1$-Laplacian $\lo_1$.
The corresponding optimal bipartition can be obtained by thresholding the eigenvector of $\lo_1$ associated with $\lambda_2$~\citep{li2018submodular}.
This eigenvector can be computed by minimizing 
	\begin{equation}\label{e:hg_R1}
		R_1(\x) = \frac{Q_1(\x)}{\min_{c\in\R}\|\x-c\mathbf{1}\|_{1,\mu}},
	\end{equation}
where $\|\x\|_{1,\mu}=\sum_{v\in\V}\mu(v)|x_v|$.
Given the eigenvector $\x$, a partitioning can be defined as $\set=\{v\in\V \,|\, x_v>t\}$ and its complement, where $t$ is a threshold value. 
The optimal $t$ can be determined as the one that minimizes the NCC in~\eqref{e:hg_ncc}.
The pipeline for the $1$-spectral clustering algorithm is summarized in Algorithm~\ref{alg:spec_cluster}.

\begin{algorithm}[t]
\caption{IPM-based minimization of $R_1(\x)$~\citep{hein2010inverse, li2018submodular}}\label{alg:ipm}
\begin{algorithmic}[1]
\vspace{0.2em}
\STATE \textbf{Input:} submodular hypergraph $\hg=(\V,\E,\mu,\{w_e\})$ with $N$ vertices, accuracy $\epsilon$
\vspace{0.2em}
\STATE \textbf{Initialization:} non-constant $\x\in\R^N$ subject to $0\in\argmin_c\|\x-c\mathbf{1}\|_{1,\mu}$, $\lambda\leftarrow R_1(\x)$
\vspace{0.2em}
\STATE \textbf{repeat} 
\vspace{0.2em}
\STATE $\forall v\in\V$, 
$
g_v \leftarrow 
\begin{cases}
\sign(x_v) \cdot \mu(v), & \text{if } x_v \neq 0 \\
\frac{\mu_{-}(\x)-\mu_{+}(\x)}{\mu_{0}(\x)} \cdot \mu(v), & \text{if } x_v=0
\end{cases}
$
\vspace{0.2em}
\STATE $\x \leftarrow \argmin_{\|\x\|\leq 1} Q_1(\x) - \lambda\langle\x,\pmb{g}\rangle$  \COMMENT{\bf(inner problem)}
\vspace{0.2em}
\STATE $c \leftarrow \argmin_c\|\x-c\mathbf{1}\|_{1,\mu}$
\vspace{0.2em}
\STATE $\x \leftarrow \x- c\mathbf{1}$
\vspace{0.2em}
\STATE $\lambda' \leftarrow \lambda, \lambda \leftarrow R_1(\x)$
\vspace{0.2em}
\STATE \textbf{until} $\frac{|\lambda-\lambda'|}{\lambda'}<\epsilon$
\vspace{0.2em}
\STATE \textbf{Output:} $\x$
\vspace{0.2em}
\end{algorithmic}
\end{algorithm}

The minimization of $R_1(\x)$ can be solved based on the inverse power method~\citep{hein2010inverse, li2018submodular}, as outlined in Algorithm~\ref{alg:ipm}.
Three functions are introduced: $\mu_{+}(\x)=\sum_{v\in\V:x_v>0}\mu(v)$, $\mu_{-}(\x)=\sum_{v\in\V:x_v<0}\mu(v)$ and $\mu_{0}(\x)=\sum_{v\in\V:x_v=0}\mu(v)$.
Although this algorithm cannot guarantee convergence to the second eigenvector, the objective $R_1(\x)$ is guaranteed to decrease and converge in the iterative process.
Moreover, if we start from some point $\x = \mathbf{1}_\set$ in Algorithm~\ref{alg:ipm} where $(\set,\V\setminus\set)$ is a given partition of $\V$, then each step of the IPM-based method gives a partition that has a smaller (or at least equal) NCC value (see Theorem 4.2 in~\citep{hein2011beyond}).
This implies that we can leverage the partition obtained via other methods as initialization.
The algorithm was first proposed for the undirected graph setting~\citep{hein2010inverse}, then generalized to submodular hypergraphs with cardinality-based splitting functions~\citep{li2018submodular}.
It is actually a special case of a more general class of minimization algorithms called RatioDCA and generalized RatioDCA proposed in~\citep{hein2011beyond} and~\citep{tudisco2018community} in order to handle more types of balanced graph cuts and modularity measures, respectively.
The major difference between the graph setting and the hypergraph setting lies in how the inner-loop optimization problem (cf. line 5 in Algorithm~\ref{alg:ipm}) is solved.

In~\citep{li2018submodular}, the authors solved the inner problem using a random coordinate descent method~\citep{ene2015random} together with a divide-and-conquer algorithm proposed in~\citep{jegelka2013reflection}. 
The computational complexity of the divide-and-conquer algorithm depends on the time of solving the problem $\min_{\set\subseteq e} F(\set) \triangleq w_e(\set) + \z(\set)$ for an arbitrary vector $\z\in\R^{|e|}$.
For a cardinality-based splitting function, the solution to this problem can be found efficiently via a line search even when $|e|$ is large. 
In the line search, we create a series of sets $\set_0=\emptyset,\set_1,\cdots,\set_{|e|}$, where $\set_i$ contains $i$ vertices corresponding to the first $i$ smallest entries in vector $\z$.
We compare their objective values $F(\set_i)$ and identify the solution $\set_{i^\ast}$ leading to the minimum objective value. 
However, such solution does not work for EDVW-based splitting functions.
In the following section, we study the inner problem considering the EDVW-based case.

\subsection{Solution to the inner problem}

We propose efficient solutions to the inner problem for EDVW-based submodular hypergraphs leveraging the property that they are graph reducible.
More generally speaking, the proposed solutions work for all graph reducible submodular hypergraphs.
We first show that the inner problem is equivalent to another optimization problem~\eqref{e:inner_equiv_1} defined on the digraph obtained via the reduction.

\begin{theorem}\label{THM:INNER_EQUIV}
For any submodular hypergraph $\hg$ with vertex set $\V$, if it is reducible to a digraph $\G$ with vertex set $\V_{\G}=\V\cup\bar{\V}$ and edge set $\E_{\G}$, i.e., \eqref{e:reduce1} or \eqref{e:reduce2} holds, then the solution $\x$ to the inner problem in Algorithm~\ref{alg:ipm} can be obtained (up to a scalar multiple) by setting $\x=\y_{\V}$ and $\y\in\R^{|\V_\G|}$ is the solution to 
	\begin{equation}\label{e:inner_equiv_1}
		\min_{\|\y\|_2 \leq 1} Q_1^{(g)}(\y) - \langle\y, \tilde{\pmb{g}}\rangle	
	\end{equation}
where $\tilde{\pmb{g}}\in\R^{|\V_\G|}$ is a vector composed of $\tilde{\pmb{g}}_{\V}=\lambda\pmb{g}$ and $\tilde{\pmb{g}}_{\bar{\V}}=\mathbf{0}$.
\end{theorem}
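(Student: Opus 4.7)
The plan is to combine Theorem~\ref{THM:REDUCIBLE} with the positive one-homogeneity of both objectives. By the Lov\'asz-extension form of graph reducibility in~\eqref{e:reduce2}, $Q_1(\x)=\min_{\bar{\x}\in\R^M} Q_1^{(g)}(\y)$, and because the auxiliary coordinates of $\tilde{\pmb{g}}$ vanish by construction, $\langle\y,\tilde{\pmb{g}}\rangle=\lambda\langle\x,\pmb{g}\rangle$ independently of $\bar{\x}$. Adding these two identities gives
\[
Q_1(\x)-\lambda\langle\x,\pmb{g}\rangle \;=\; \min_{\bar{\x}\in\R^M}\bigl[Q_1^{(g)}(\y)-\langle\y,\tilde{\pmb{g}}\rangle\bigr],
\]
so the hypergraph inner problem amounts to the partial minimization of exactly the integrand appearing in~\eqref{e:inner_equiv_1} over $\x$ in the unit ball with $\bar{\x}$ unrestricted, whereas~\eqref{e:inner_equiv_1} minimizes that integrand over the coupled constraint $\|\y\|_2\le 1$.

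Because both integrands are positively one-homogeneous in their arguments, each ball-constrained problem is equivalent to minimizing the scale-invariant Rayleigh-type ratio $[Q_1(\x)-\lambda\langle\x,\pmb{g}\rangle]/\|\x\|_2$ or $[Q_1^{(g)}(\y)-\langle\y,\tilde{\pmb{g}}\rangle]/\|\y\|_2$; minimizers are thereby determined only up to positive scaling, which is the source of the ``up to a scalar multiple'' clause. Given a solution $\y^{\star}$ of~\eqref{e:inner_equiv_1} with $\|\y^{\star}\|_2=1$, I would first verify that $\y^{\star}_\V\neq\mathbf{0}$ in the non-trivial regime (otherwise $Q_1^{(g)}(\y^{\star})\ge 0$ and the reducibility lower bound $Q_1^{(g)}(\y)\ge Q_1(\y_\V)$ forces a non-negative hypergraph minimum, a degenerate case in which $\x=\mathbf{0}$ is already optimal). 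To show that $\x=\y^{\star}_\V$ is a hypergraph minimizer up to positive rescaling, I would chain two inequalities: the pointwise lower bound $Q_1^{(g)}(\y)-\langle\y,\tilde{\pmb{g}}\rangle\ge Q_1(\y_\V)-\lambda\langle\y_\V,\pmb{g}\rangle$ supplied by Theorem~\ref{THM:REDUCIBLE} bounds the graph Rayleigh value from below by the hypergraph Rayleigh value along the $\V$-direction; conversely, starting from any hypergraph optimizer $\x^{\star}$ on the unit sphere together with an inner optimizer $\bar{\x}^{\star}\in\argmin_{\bar{\x}}Q_1^{(g)}(\x^{\star},\bar{\x})$ and rescaling $(\x^{\star},\bar{\x}^{\star})$ to unit $\ell_2$-norm yields a point feasible for~\eqref{e:inner_equiv_1} whose Rayleigh value saturates that lower bound. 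Combined, these two estimates force $\y^{\star}_\V/\|\y^{\star}_\V\|_2=\x^{\star}/\|\x^{\star}\|_2$.

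The main obstacle is the mismatch between feasible sets: the graph constraint $\|\y\|_2\le 1$ couples $\x$ and $\bar{\x}$, whereas the hypergraph formulation leaves $\bar{\x}$ free, so the two minimum \emph{values} genuinely differ and the equivalence must be stated at the level of solution \emph{directions} rather than values. The delicate step is handling candidates with $\|\x\|_2$ so large that the unconstrained inner optimizer $\bar{\x}^{\star}(\x)$ violates $\|\y\|_2\le 1$; I would close this gap by invoking one-homogeneity to restrict attention to the unit sphere and observing that along the ray $\{t(\x^{\star},\bar{\x}^{\star}):t>0\}$ the graph Rayleigh ratio is constant and equal to the graph minimum, which is precisely the rescaling relation asserted by the theorem.
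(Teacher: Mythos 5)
Your first step is exactly the paper's: by Theorem~\ref{THM:REDUCIBLE} and the fact that $\tilde{\pmb{g}}$ vanishes on $\bar{\V}$, the hypergraph inner objective equals the partial minimum over $\bar{\x}$ of the graph objective, and you correctly identify the real difficulty, namely that the constraint $\|\y\|_2\le 1$ couples $\x$ and $\bar{\x}$ while the partial-minimization identity leaves $\bar{\x}$ free. But your resolution of that difficulty does not go through. Writing $h_G(\y)=Q_1^{(g)}(\y)-\langle\y,\tilde{\pmb{g}}\rangle$ and $h_H(\x)=Q_1(\x)-\lambda\langle\x,\pmb{g}\rangle=\min_{\bar{\x}}h_G(\x,\bar{\x})$, your ``conversely'' step takes a hypergraph optimizer $\x^{\star}$ on the unit sphere with inner optimizer $\bar{\x}^{\star}$ and asserts that rescaling $(\x^{\star},\bar{\x}^{\star})$ to unit norm saturates the lower bound. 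It does not: the Rayleigh value of that point is
\[
\frac{h_G(\x^{\star},\bar{\x}^{\star})}{\|(\x^{\star},\bar{\x}^{\star})\|_2}
=\frac{h_H(\x^{\star})}{\sqrt{1+\|\bar{\x}^{\star}\|_2^2}},
\]
which is strictly larger (less negative) than $h_H(\x^{\star})$ whenever $\bar{\x}^{\star}\neq\mathbf{0}$, because the auxiliary coordinates inflate the denominator without changing the numerator. Your two estimates therefore only give one-sided bounds that do not meet; the $\ell_2$-ball graph problem effectively penalizes those directions $\x$ whose inner optimizer $\bar{\x}^{\star}(\x)$ has large norm, a penalty absent from the hypergraph ratio, so nothing forces $\y^{\star}_{\V}$ to be parallel to a hypergraph minimizer. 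The final claim that the graph Rayleigh ratio along the ray $\{t(\x^{\star},\bar{\x}^{\star})\}$ ``equals the graph minimum'' is exactly the unproven (and in general false) assertion.

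The paper avoids this by choosing the norm for which the coupling disappears: with $\|\y\|_{\infty}\le 1$ the feasible set factors as $\{\|\x\|_{\infty}\le 1\}\times\{\|\bar{\x}\|_{\infty}\le 1\}$, and Lemma~\ref{lemma:partial_equiv} guarantees that the unconstrained minimizer over $\bar{\x}$ may be taken inside $[\min_v x_v,\max_v x_v]^M\subseteq[-1,1]^M$, so the constraint on $\bar{\x}$ is vacuous and the two $\ell_{\infty}$ problems coincide exactly (same optimal values and same $\V$-part of the minimizers). The return to the Euclidean ball is then handled by the separate observation, stated at the outset of the paper's proof and standard for the IPM, that the choice of norm in the inner problem only affects the scale of the solution. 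To repair your argument you would either need to establish that norm-insensitivity claim for the $\ell_2$ ball directly, or, more simply, switch to the $\ell_{\infty}$ formulation and exploit the product structure together with Lemma~\ref{lemma:partial_equiv} as the paper does.
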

\begin{proof}
	The proof can be found in Appendix~\ref{s:proof_inner_equiv} where we have used the equivalent definition for graph reducibility proposed in Theorem~\ref{THM:REDUCIBLE}. 
\end{proof}

We present two ways for solving problem~\eqref{e:inner_equiv_1} in Sections~\ref{sss:fista} and~\ref{sss:pdhg} respectively.

\subsubsection{Solving the inner problem via FISTA}\label{sss:fista}

Both of the original inner problem and its equivalent problem~\eqref{e:inner_equiv_1} are convex but non-smooth.
Inspired by~\citep{hein2010inverse}, we derive a dual formulation~\eqref{e:inner_equiv_2} of problem~\eqref{e:inner_equiv_1}.
Compared to the primal problem, the objective function $\Psi$ of the dual problem is smooth.
Moreover, problem~\eqref{e:inner_equiv_2} can be efficiently solved using a fast iterative shrinkage-thresholding algorithm (FISTA)~\citep{beck2009fast,nesterov1983method}, which has a guaranteed convergence rate $O(1/k^2)$ where $k$ is the number of iterations.
FISTA requires an upper bound on the Lipschitz constant of the gradient of $\Psi$ as the input, which is provided in~\eqref{e:Lipschitz}.
The steps of FISTA are summarized in Algorithm~\ref{alg:fista}.

To make it clear, in the theorem below, the set $\tilde{\E}_{\G}$ (as well as the directed edge set $\E_{\G}$) contains \emph{ordered} node pairs, meaning that $(u,v)$ and $(v,u)$ are different.
The parameter $m$ can be understood as the number of connected node pairs where the connection might be unidirectional or bidirectional.

\begin{theorem}\label{THM:INNER_DUAL}
\normalfont
Define a set 
$\tilde{\E}_{\G} = \{(u,v)\in\V_{\G}\times\V_{\G} \,|\, (u,v)\in\E_{\G} \text{ or } (v,u)\in\E_{\G}\}$
and set $m = |\tilde{\E}_{\G}| / 2$.
The dual of problem~\eqref{e:inner_equiv_1} is 
	\begin{equation}\label{e:inner_equiv_2}
		\min_{\substack{\pmb{\alpha}\in [0,1]^m \\ \alpha_{uv}+\alpha_{vu}=1}} \Psi(\pmb{\alpha}) \triangleq \| f_A(\pmb{\alpha}) - \tilde{\pmb{g}} \|_2^2
	\end{equation}
where $\pmb{\alpha}$ is a vector of length $m$ collecting all $\alpha_{uv}$ satisfying $(u,v)\in\tilde{\E}_{\G}$ and $u<v$.
For the function $f_A:\R^m\to\R^{N+M}$, the $u$th element of $f_A(\pmb{\alpha})$ is  
	\begin{equation*}
		[f_A(\pmb{\alpha})]_u = \sum\nolimits_{v \,|\, (u,v) \in \tilde{\E}_{\G}} A_{uv}\alpha_{uv} - A_{vu}\alpha_{vu}.
	\end{equation*}
The primal and dual variables are related as 
	\begin{equation}\label{e:linear_unit_ball}
		\y = - \frac{f_A(\pmb{\alpha}) - \tilde{\pmb{g}}}{\| f_A(\pmb{\alpha}) - \tilde{\pmb{g}} \|_2}.
	\end{equation}
The Lipschitz constant of the gradient of $\Psi$ is upper bounded by
	\begin{equation}\label{e:Lipschitz}
		L = 4 \max_{u\in\V_{\G}} \sum\nolimits_{v\in\V_{\G}} (A_{uv}+A_{vu})^2.
	\end{equation}
\end{theorem}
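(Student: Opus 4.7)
The plan is to derive the dual by combining the variational representation of the positive part with Sion's minimax theorem, then bound the Lipschitz constant of $\nabla\Psi$ via Gershgorin's circle theorem applied to the Hessian of $\Psi$. First I would rewrite the digraph Lovász extension using $\max\{t,0\}=\max_{\alpha\in[0,1]}\alpha t$ on every term of $Q_1^{(g)}(\y)=\sum_{(u,v)\in\E_{\G}} A_{uv}\max\{y_u-y_v,0\}$, extending the summation to $\tilde{\E}_{\G}$ by the convention $A_{uv}=0$ outside $\E_{\G}$, and then grouping the two ordered pairs associated with each unordered pair $\{u,v\}$. The joint contribution of such a pair equals $(A_{uv}\alpha_{uv}-A_{vu}\alpha_{vu})(y_u-y_v)$, whose maximum over $[0,1]^2$ is attained at $(1,0)$ or $(0,1)$ according to the sign of $y_u-y_v$ (and arbitrarily when $y_u=y_v$, where the contribution vanishes); in every case one vertex of $[0,1]^2$ that is an optimizer lies on the diagonal $\alpha_{uv}+\alpha_{vu}=1$, so the restriction to $\mathcal{A}=\{\pmb{\alpha}\in[0,1]^m:\alpha_{uv}+\alpha_{vu}=1\}$ is lossless. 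Rearranging the double sum by node yields
\[
Q_1^{(g)}(\y) \;=\; \max_{\pmb{\alpha}\in\mathcal{A}} \langle \y, f_A(\pmb{\alpha})\rangle.
\]

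Substituting into the inner problem rewrites it as the minimax
\[
\min_{\|\y\|_2\le 1}\max_{\pmb{\alpha}\in\mathcal{A}} \langle \y,\; f_A(\pmb{\alpha}) - \tilde{\pmb{g}}\rangle.
\]
Both feasible sets are nonempty, convex and compact, and the objective is bilinear, hence convex in $\y$ and concave in $\pmb{\alpha}$; Sion's minimax theorem therefore allows swapping the order. For fixed $\pmb{\alpha}$, the inner minimization of a linear functional over the Euclidean unit ball is attained at $\y^{*}=-(f_A(\pmb{\alpha})-\tilde{\pmb{g}})/\|f_A(\pmb{\alpha})-\tilde{\pmb{g}}\|_2$ with value $-\|f_A(\pmb{\alpha})-\tilde{\pmb{g}}\|_2$, giving the primal-dual relation~\eqref{e:linear_unit_ball}. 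Since maximizing $-\|\cdot\|_2$ and $-\|\cdot\|_2^2$ over $\mathcal{A}$ share the same argmin, one obtains the squared form~\eqref{e:inner_equiv_2}.

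For the Lipschitz bound, I would eliminate $\alpha_{vu}=1-\alpha_{uv}$ (for $u<v$) and write $f_A(\pmb{\alpha}) = J\pmb{\alpha}+c$ with a suitable Jacobian $J\in\R^{(N+M)\times m}$ and constant vector $c$; then $\nabla^2\Psi = 2J^{\top}J$ is constant, so the Lipschitz constant of $\nabla\Psi$ is exactly $2\lambda_{\max}(J^{\top}J)=2\lambda_{\max}(JJ^{\top})$. Differentiating $f_A$ shows that column $\{u,v\}$ of $J$ has just two nonzero entries, $A_{uv}+A_{vu}$ at row $u$ and $-(A_{uv}+A_{vu})$ at row $v$. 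Thus $(JJ^{\top})_{uu}=\sum_{v\ne u}(A_{uv}+A_{vu})^2$ and $(JJ^{\top})_{uw}=-(A_{uw}+A_{wu})^2$ for $u\ne w$, so the Gershgorin disc for row $u$ is contained in $[0,\,2\sum_{v\in\V_{\G}}(A_{uv}+A_{vu})^2]$. Hence $\lambda_{\max}(JJ^{\top})\le 2\max_{u}\sum_v(A_{uv}+A_{vu})^2$, which doubles to the bound~\eqref{e:Lipschitz}.

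The most delicate part is the reduction in the first step: one must justify that restricting to the simplex constraint $\alpha_{uv}+\alpha_{vu}=1$ preserves the pointwise maximum, carefully handling the bidirected case (where both $A_{uv}$ and $A_{vu}$ are positive) as well as the degenerate ties $y_u=y_v$. Once this variational identity is in place, the minimax interchange, the closed-form $\ell_2$ unit-ball linear program, and the Gershgorin estimate are all standard and largely mechanical given the explicit sparsity pattern of $J$.
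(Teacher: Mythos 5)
Your derivation of the dual is essentially the paper's: the same variational identity $\max\{t,0\}=\max_{\alpha\in[0,1]}\alpha t$, the same regrouping into $Q_1^{(g)}(\y)=\max_{\pmb{\alpha}}\langle\y,f_A(\pmb{\alpha})\rangle$ (the paper leaves the losslessness of the constraint $\alpha_{uv}+\alpha_{vu}=1$ implicit, whereas you verify it pairwise --- a worthwhile addition), a minimax swap (you cite Sion, the paper cites Corollary 37.3.2 of Rockafellar; either suffices for a bilinear objective over compact convex sets), and the closed-form linear minimization over the Euclidean unit ball giving \eqref{e:linear_unit_ball}. Where you genuinely diverge is the Lipschitz bound. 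The paper eliminates $\alpha_{vu}=1-\alpha_{uv}$, computes $\partial\Psi/\partial\alpha_{ij}$ explicitly, and bounds $\|\nabla\Psi(\pmb{\alpha})-\nabla\Psi(\pmb{\beta})\|_2^2$ directly via $(a+b)^2\le 2(a^2+b^2)$ and Cauchy--Schwarz, which is a somewhat laborious chain of estimates. You instead observe that after the same elimination $\Psi$ is an exact quadratic $\|J\pmb{\alpha}+c-\tilde{\pmb{g}}\|_2^2$ with constant Hessian $2J^{\top}J$, identify the two-nonzero-entry column structure of $J$, and apply Gershgorin to $JJ^{\top}$ (whose row sums of off-diagonal magnitudes equal the diagonal entries), landing on the same constant $L=4\max_u\sum_v(A_{uv}+A_{vu})^2$. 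Your route is cleaner and more structural --- it makes transparent why the factor of $4$ appears (a $2$ from the Hessian and a $2$ from the Gershgorin radius) and it shows the bound is really a bound on $\lambda_{\max}(2J^{\top}J)$, i.e., tight up to the Gershgorin slack; the paper's route is more elementary but obscures this. The only points worth tightening are the degenerate case $f_A(\pmb{\alpha})=\tilde{\pmb{g}}$ in \eqref{e:linear_unit_ball} (which the paper also glosses over) and an explicit note that $\lambda_{\max}(J^{\top}J)=\lambda_{\max}(JJ^{\top})$; neither is a gap.
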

\begin{proof}
	The proof is given in Appendix~\ref{s:proof_inner_dual}.	
\end{proof}


In a nutshell, to solve the inner problem, we first compute the adjacency matrix of the digraph according to the graph reduction procedure proposed in~\citep{zhu2022ans}.
Then we solve the dual problem~\eqref{e:inner_equiv_2} using FISTA, and get the solution $\y$ to the primal problem~\eqref{e:inner_equiv_1} according to the relation between the primal and dual variables as shown in~\eqref{e:linear_unit_ball}.
Finally, we obtain the solution $\x$ to the original problem by taking the entries of $\y$ indexed by $\V$.

\begin{algorithm}[t]
\caption{Solution of problem \eqref{e:inner_equiv_2} with FISTA}\label{alg:fista}
\begin{algorithmic}[1]
\vspace{0.2em}
\STATE \textbf{Input:} A Lipschitz constant $L$ of $\nabla\Psi$
\vspace{0.2em}
\STATE \textbf{Initialization:} $\pmb{\alpha}=\pmb{\beta}\in\R^m$, $t=1$
\vspace{0.2em}
\STATE \textbf{repeat} 
\vspace{0.2em}
\STATE $\pmb{\alpha}' \leftarrow \pmb{\alpha}$, $\pmb{\alpha} \leftarrow \pmb{\beta} - \frac{1}{L}\nabla\Psi(\pmb{\beta})$
\vspace{0.2em}
\STATE $\pmb{\alpha} \leftarrow \mathcal{P}_{0,1}(\pmb{\alpha})$
\vspace{0.2em}
\STATE $t' \leftarrow t$, $t \leftarrow \frac{1 + \sqrt{1+4t^2}}{2}$
\vspace{0.2em}
\STATE $\pmb{\beta} \leftarrow \pmb{\alpha} + \frac{t'-1}{t}(\pmb{\alpha}-\pmb{\alpha}')$
\vspace{0.2em}
\STATE \textbf{until} convergence or a predefined maximum number of iterations is reached
\vspace{0.2em}
\STATE \textbf{Output:} $\pmb{\alpha}$
\vspace{0.2em}
\end{algorithmic}
\end{algorithm}

\subsubsection{Solving the inner problem via PDHG}\label{sss:pdhg}

Problem~\eqref{e:inner_equiv_1} can also be solved using a primal-dual hybrid gradient (PDHG) algorithm~\citep{chambolle2011first, chambolle2016introduction, chambolle2016ergodic}.
Although both FISTA and PDHG ensure a quadratic convergence rate, it has been observed that PDHG can outperform FISTA in practice for clustering applications~\citep{hein2011beyond}.

PDHG is able to solve problems in the following form:
\begin{equation}\label{e:pdhg}
	\min_{\y\in\R^{N_1}} f_1(\y) + f_2(\mathbf{B}\y),	
\end{equation}
whose dual problem is
\begin{equation}\label{e:pdhg_dual}
	\max_{\z\in\R^{N_2}} -f_1^\ast(-\mathbf{B}^\top\z) - f_2^\ast(\z),	
\end{equation}
where $f_1:\R^{N_1}\to(-\infty,+\infty]$ and $f_2:\R^{N_2}\to(-\infty,+\infty]$ are proper, convex, lower semicontinuous functions, $\mathbf{B}:\R^{N_1}\to\R^{N_2}$ is a bounded linear operator, and $f_1^\ast$ and $f_2^\ast$ are the corresponding conjugate functions of $f_1$ and $f_2$~\citep{boyd2004convex}.

The solution to problem~\eqref{e:inner_equiv_1} can be obtained via normalizing the solution to problem~\eqref{e:inner_equiv_3} below.
\begin{equation}\label{e:inner_equiv_3}
	\min_{\y\in\R^{|\V_\G|}} Q_1^{(g)}(\y) + \frac{1}{2} \|\y-\tilde{\pmb{g}}\|_2^2.
\end{equation}
We can fit~\eqref{e:inner_equiv_3} into the form~\eqref{e:pdhg} by setting $N_1=|\V_\G|$, $N_2=|\E_\G|$, $f_1(\y)=\frac{1}{2}\|\y-\tilde{\pmb{g}}\|_2^2$, $f_2(\z)=\sum_{i=1}^{N_2}\max\{z_i,0\}$, and $\mathbf{B}$ is a $|\E_\G| \times |\V_\G|$ matrix.
For the row of $\mathbf{B}$ corresponding to edge $u \to v$, the $u$th and $v$th elements in the row are respectively equal to $A_{uv}$ and $-A_{uv}$, and the other elements are zero.
We can show that $f_1^\ast(\y) = \frac{1}{2}\|\y\|_2^2 + \langle\y, \tilde{\mathbf{g}}\rangle$, $f_2^\ast(\z) = 0$ with the domain that $0 \leq z_i \leq 1$ for any $0 \leq i \leq N_2$.
Since $f_1$ is $1$-strongly convex, we can leverage an accelerated variant of the PDHG algorithm. 
The algorithm tailored for problem~\eqref{e:inner_equiv_3} is given in Algorithm~\ref{alg:pdhg} (cf. Algorithm 8 in~\citep{chambolle2016introduction}).




\begin{algorithm}[t]
\caption{Solution of problem~\eqref{e:inner_equiv_3} with accelerated PDHG}\label{alg:pdhg}
\begin{algorithmic}[1]
\vspace{0.2em}
\STATE \textbf{Initialization:} $\tau\sigma\leq\frac{1}{\|\mathbf{B}\|_2^2}$, $\y=\bar{\y}\in\R^{|\V_\G|}$, $\z\in\R^{|\E_\G|}$
\vspace{0.2em}
\STATE \textbf{repeat} 
\vspace{0.2em}
\STATE $\z \leftarrow \mathcal{P}_{0,1}(\z+\sigma\mathbf{B}\bar{\y})$
\vspace{0.2em}
\STATE $\y' \leftarrow \y$, $\y \leftarrow \frac{1}{1+\tau}\left(\y-\tau(\mathbf{B}^\top\z-\tilde{\pmb{g}})\right)$
\vspace{0.2em}
\STATE $\theta \leftarrow \frac{1}{\sqrt{1+\tau}}$, $\tau \leftarrow \theta\tau$, $\sigma \leftarrow \frac{\sigma}{\theta}$
\vspace{0.2em}
\STATE $\bar{\y} \leftarrow \y + \theta(\y-\y')$
\vspace{0.2em}
\STATE \textbf{until} convergence or a predefined maximum number of iterations is reached
\vspace{0.2em}
\STATE \textbf{Output:} $\y$
\vspace{0.2em}
\end{algorithmic}
\end{algorithm}

\subsection{A special case: reduction to an undirected graph}

If the submodular hypergraph is reducible to an undirected graph (see Theorem 3.3 in~\citep{zhu2022ans} for examples), then there exists another dual formulation of problem~\eqref{e:inner_equiv_1} that bears a similar form to~\eqref{e:inner_equiv_2} while the gradient of its objective has a smaller Lipschitz constant (cf. Lemma 4.3 in~\citep{hein2010inverse}).
In fact, for this case, the hypergraph $1$-spectral clustering can be implemented via its graph counterpart.

Following Theorem~\ref{THM:REDUCIBLE}, we further define a vertex weight function $\mu_\G$ for the graph $\G$ such that $\mu_\G(v)=\mu(v)$ for $v\in\V$ and $\mu_\G(v)=0$ for each auxiliary vertex $v\in\bar{\V}$.
Then it follows from~\eqref{e:reduce2} that $R_1(\x) = \min_{\bar{\x}\in\R^M} R_1^{(g)}(\y)$ for any $\x\in\R^N$ where $R_1^{(g)}(\y) = \frac{Q_1^{(g)}(\y)}{\min_{c\in\R}\|\y-c\mathbf{1}\|_{1,\mu_\G}}$ and $\y$ is composed of $\y_\V=\x$ and $\y_{\bar{\V}}=\bar{\x}$.
Minimizing both sides of the equation over $\x\in\R^N$ leads to the same minimum NCC value.
When $\G$ is undirected, the second eigenvector of the graph $1$-Laplacian can be computed by minimizing $R_1^{(g)}(\y)$ and then the second eigenvector of the hypergraph $1$-Laplacian can be computed as $\x=\y_{\V}$.
This can also be understood in terms of the cut function.
Following~\eqref{e:reduce1}, it is easy to show that the Cheeger constant of the graph $\G$ is identical to that of the hypergraph $\hg$.
This coincides with Theorem 3.3 in~\citep{liu2021strongly}, which further proves that if $\set$ is the set in $\G$ leading to the minimum NCC for the vertex weight function $\mu_\G$, then $\set\cap\V$ is the minimum NCC set in $\hg$.


\section{Experiments}\label{s:exp}

We evaluate the performance of the proposed $1$-spectral clustering algorithm for EDVW-based submodular hypergraphs (termed as EDVW-based hereafter) by focusing on the bipartition case.
In particular, for hyperedge splitting functions as in~\eqref{e:w_edvws}, we select $h_e(x)=x$ and $g_e(x)=\min\{x,\gamma(e)-x,\beta\gamma(e)\}$ for every hyperedge $e\in\E$, namely
	\begin{equation}\label{e:w_used}
		w_e(\set) = \kappa(e) \cdot \min\{\gamma_e(\set), \gamma_e(e)-\gamma_e(\set), \beta\gamma_e(e)\}
	\end{equation}
where $\beta$ is tunable.
Notice that we reuse the symbols $\alpha$ and $\beta$ in this section, which are different from and not related to $\pmb{\alpha}$ and $\pmb{\beta}$ used when describing FISTA.
A hypergraph with splitting functions as~\eqref{e:w_used} can be reduced to a digraph that consists of $|\V|+2|\E|$ vertices and $|\E| + 2\sum_{e\in\E}|e|$ edges.
More precisely, we first project each hyperedge $e$ onto a small graph which contains $|e|+2$ vertices including two auxiliary vertices denoted by $e'$ and $e''$.
For every $v\in e$, there are two directed edges respectively from $v$ to $e'$ and from $e''$ to $v$, both of weight $\kappa(e)\gamma_e(v)$. 
There is also a directed edge of weight $\beta\kappa(e)\gamma_e(e)$ from $e'$ to $e''$.
Then we concatenate these small graphs for all hyperedges together to form the final graph.

\noindent\textbf{Datasets.}
We consider three widely used real-world datasets.

\emph{Reuters Corpus Volume 1 (RCV1):} 
This dataset is a collection of manually categorized newswire stories~\citep{lewis2004rcv1}.
We consider two categories C14 and C23.
A few short documents containing less than $20$ words are ignored.
We select the $100$ most frequent words in the corpus after removing stop words and words appearing in $>10\%$ and $<0.2\%$ of the documents.
We then remove documents containing less than $5$ selected words, leaving us with $7446$ documents.
A document (vertex) belongs to a word (hyperedge) if the word appears in the document.
The edge-dependent vertex weights are taken as the corresponding tf-idf (term frequency-inverse document frequency) values~\citep{leskovec2020mining} to the power of $\alpha$, where $\alpha$ is a tunable parameter.

\emph{$20$ Newsgroups~\footnote{\url{http://qwone.com/~jason/20Newsgroups/}}:} 
This is also a text dataset. 
For our $2$-partition case, we consider the documents in categories `rec.autos' and `rec.sport.hockey'.
We preprocess the dataset following the same procedure as used in RCV1 above.
We finally construct a hypergraph of $1389$ vertices and $100$ hyperedges.

\emph{Covtype:}
This dataset contains patches of forest that are in different cover types. 
We consider two cover types (labeled as $4$ and $5$) and all numerical features.
Each numerical feature is first quantized into $20$ bins of equal range and then mapped to hyperedges.
The resulting hypergraph has $12240$ vertices and $196$ hyperedges. 
For each hyperedge (bin), we compute the distance between each feature value in this bin and their median, and then normalize these distances to the range $[0,1]$.
The edge-dependent vertex weights are computed as $\exp(-\alpha\cdot \text{distance})$.
Under this setting, larger edge-dependent vertex weights are assigned to vertices whose feature values are close to the typical feature value in the corresponding bin.

Following~\citep{hayashi2020hypergraph}, for all datasets we set the hyperedge weight $\kappa(e)$ to the standard deviation of the EDVW $\gamma_e(v)$ for all $v\in\V$.
Following~\citep{li2018submodular}, we set the vertex weight $\mu(v)$ to the vertex degree defined in the submodular hypergraph model, i.e., $\mu(v)=\sum_{e\ni v}\vartheta_e$.

\begin{figure}[t!]
\centering
\includegraphics[scale=0.5]{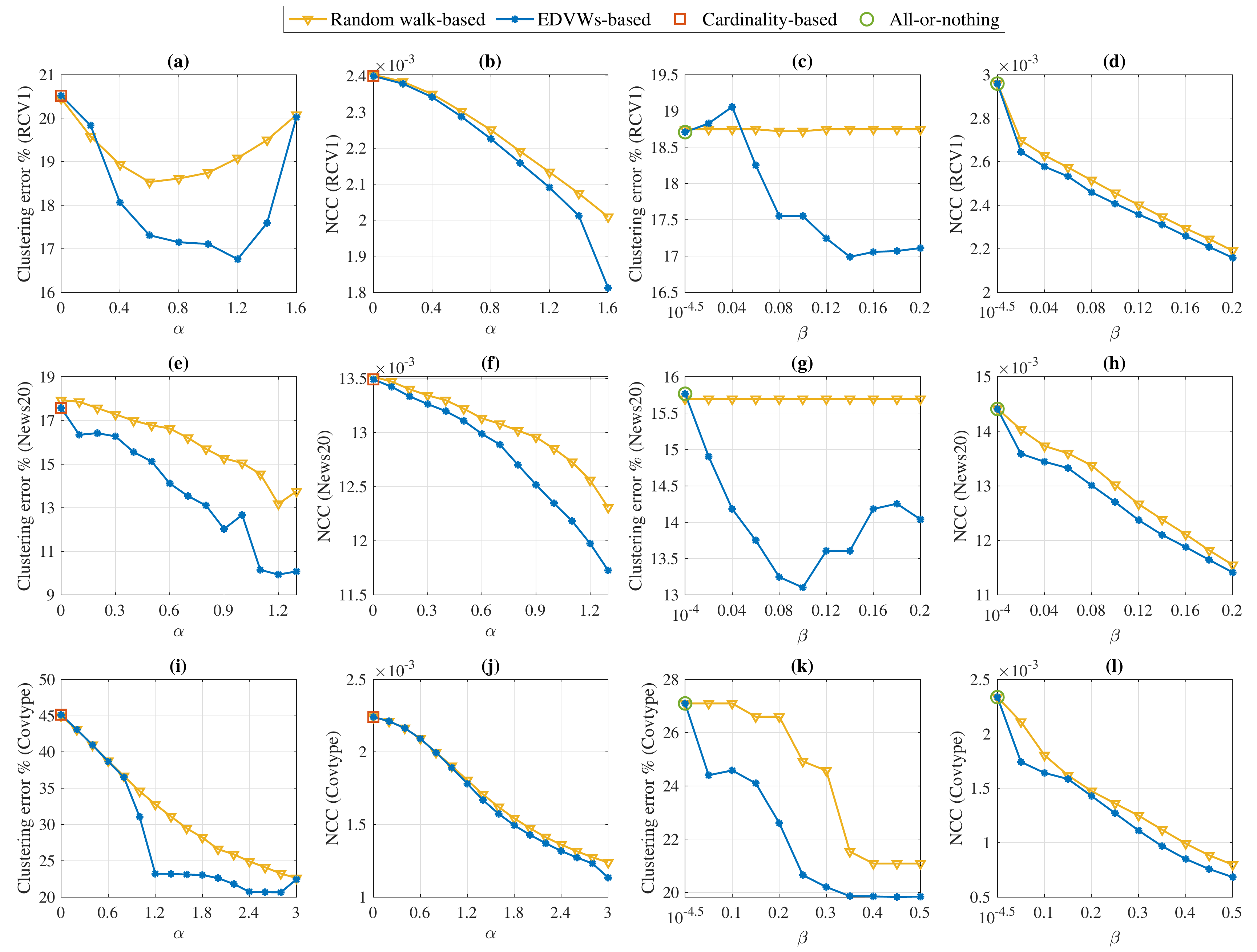}	
\caption{Clustering performance for three real-world datasets displayed in pairs of figures depicting the clustering error and the NCC value as a function of the parameters $\alpha$ or $\beta$. For RCV1, we fix $\beta=0.2$ in (a-b) and $\alpha=1$ in (c-d). For $20$ Newsgroups, we fix $\beta=0.1$ in (e-f) and $\alpha=0.8$ in (g-h). For Covtype, we fix $\beta=0.2$ in (i-j) and $\alpha=2$ in (k-l). A proper choice of $\alpha$ and $\beta$ helps significantly decrease the clustering error compared with existing methods. The performance improvement may benefit from both the use of EDVW and $1$-Laplacian.}
\label{fig:res}
\end{figure}

\noindent\textbf{Baselines.}
We compare the proposed approach with three baseline methods.

\emph{Random walk-based:}
The paper~\citep{hayashi2020hypergraph} defines a hypergraph Laplacian matrix based on random walks with EDVW. 
We compute the second eigenvector of the normalized hypergraph Laplacian (cf. (6) in~\citep{hayashi2020hypergraph}) and then threshold it to get the partitioning.

\emph{Cardinality-based:}
In the description of the datasets above, when $\alpha=0$, we get the trivial EDVW and the splitting functions reduce to cardinality-based ones. 

\emph{All-or-nothing:}
For the adopted splitting functions~\eqref{e:w_used}, they reduce to the all-or-nothing case if $\beta$ is small enough (i.e., $\beta \leq \min_{v\in e}\gamma_e(v)$).

For the proposed method, in Algorithm~\ref{alg:ipm} we adopt the eigenvector obtained in the random walk-based method described above as the starting point.
We solve the inner problem using PDHG presented in Algorithm~\ref{alg:pdhg}.
Moreover, in Algorithm~\ref{alg:pdhg} we initialize $\tau = \sigma = \frac{0.9}{\|\mathbf{B}\|_2}$ and $\mathbf{z} = \mathbf{0}$.

\noindent\textbf{Results.}
The results are shown in Figure~\ref{fig:res}.
We present both the clustering error and the NCC where the clustering error is computed as the fraction of incorrectly clustered samples.
For the RCV1 dataset (a-d), we fix $\beta=0.2$ to observe the influence of $\alpha$ in (a-b) and fix $\alpha=1$ to test $\beta$ in (c-d). 
For the 20 Newsgroups dataset (e-h), we fix $\beta=0.1$ in (e-f) and $\alpha=0.8$ in (g-h) to observe the effects of $\alpha$ and $\beta$, respectively.
For the Covtype dataset (i-l), we fix $\beta=0.2$ in (i-j) and $\alpha=2$ in (k-l).
We can see that for all the considered datasets, when edge-dependent vertex weights are ignored (including $\alpha=0$ for cardinality-based splitting functions and $\beta$ close to zero for the all-or-nothing splitting function), the clustering performance is severely deteriorated, validating that it is necessary to incorporate EDVW for extra modeling flexibility.
It can also be observed that, when appropriate values for $\alpha$ and $\beta$ are selected (such as $\alpha=1.2, \beta=0.2$ for RCV1, $\alpha = 1.2, \beta=0.1$ for 20 Newsgroups and $\alpha=2.4, \beta=0.2$ for Covtype), the proposed method performs much better than the random walk-based method which depends on the classical Laplacian. 
This highlights the value of using the non-linear $1$-Laplacian in spectral clustering.
To summarize, both the use of EDVW and $1$-Laplacian are beneficial for improving the spectral clustering performance.

\section{Related Work}\label{s:discuss}

We discuss the related works in this section. 
We will show the relationship between hypergraph Laplacians introduced via random walks and those defined based on submodular splitting functions.
We will also show how the proposed solution to the inner problem contributes to SFM.

\noindent\textbf{Random walk-based Laplacians.}
Here we expand the discussion about the relation between hypergraph Laplacians based on random walks with EDVW and those studied in this paper considering submodular EDVW-based splitting functions.

Considering the hypergraph model with EDVW (cf. Section~\ref{ss:hg_edvws}), the random walk incorporating EDVW is defined as follows~\citep{chitra2019random}.
Starting at some vertex $u$, the walker selects a hyperedge $e$ containing $u$ with probability proportional to $\kappa(e)$, then moves to vertex $v$ contained in $e$ with probability proportional to $\gamma_e(v)$.
In this process, the probability of moving from $u$ to $v$ via $e$ is
\begin{equation}\label{e:p_uev}
P_{u\to e\to v} = 
\begin{cases}
\frac{\kappa(e)}{\sum_{e'\ni u}\kappa(e')}\cdot\frac{\gamma_e(v)}{\gamma_e(e)}, & u,v\in e, \\
0, & \text{else}.
\end{cases}
\end{equation}
Then one can define a transition matrix $\mathbf{P}$ whose $(u,v)$th entry denotes the transition probability from $u$ to $v$ and is computed as $P_{uv}=\sum_{e\in\E}P_{u\to e\to v}$.
When the hypergraph is connected, the random walk converges to a unique stationary distribution $\pmb{\pi}$ which is the all-positive dominant left eigenvector of $\mathbf{P}$ scaled to satisfy $\|\pmb{\pi}\|_1=1$.
In~\citep{chitra2019random, hayashi2020hypergraph}, hypergraph Laplacians based on such random walks are proposed and they are actually equal to combinatorial and normalized graph Laplacian matrices (i.e., graph $2$-Laplacians) of an undirected graph which is defined on the same vertex set as the hypergraph and has the following adjacency matrix
	\begin{equation}\label{e:rw_adj}
	\mathbf{A} = \frac{\mathbf{\Phi P} + \mathbf{P \Phi}}{2}	
	\end{equation}
where $\mathbf{\Phi}$ is a diagonal matrix whose $(v,v)$th entry is $\pi_v$.

Consider a hypergraph $\hg$ with submodular splitting functions in the following form for each of its hyperedges:
	\begin{equation}\label{e:rw_w}
	w_e(\set) = \sum\nolimits_{u\in\set, v\in e\setminus\set} \frac{\pi_{u}P_{u\to e\to v}+\pi_{v}P_{v\to e\to u}}{2}, \quad \forall \set\subseteq e.
	\end{equation}
It is easy to show that $\hg$ is reducible to the graph $\G$ defined by the adjacency matrix~\eqref{e:rw_adj} since they have the same cut function as $\cut_{\hg}(\set) = \cut_\G(\set) = \sum\nolimits_{u\in\set, v\in\V\setminus\set} \frac{\pi_{u}P_{uv}+\pi_{v}P_{vu}}{2}$.
Notice that there are no auxiliary vertices introduced in the graph reduction.
Following Theorem~\ref{THM:REDUCIBLE}, we have $Q_1(\x) = Q_1^{(g)}(\x)$, implying that the hypergraph $1$-Laplacian of $\hg$ is identical to the graph $1$-Laplacian of $\G$ if we assume that they share the same vertex weight function $\mu$.

\noindent\textbf{Decomposable submodular function minimization.}
In the inner problem of Algorithm~\ref{alg:ipm}, if the norm of $\x$ is $\|\x\|_{\infty}$, the problem is equivalent to the following SFM problem 
	\begin{equation}\label{e:dsfm}
		\min_{\set\subseteq\V} \sum\nolimits_{e\in\E} w_e(\set) - \lambda \mathbf{g}(\set),
	\end{equation}
where the primal and dual variables are related as $x_v=1$ if $v\in\set$ and $x_v=-1$ if $v\notin\set$~\citep{li2018submodular}.
Hence, the proposed solution to the inner problem can also be used to solve problems in the form of~\eqref{e:dsfm} when each function $w_e$ can be represented as a concave function applied to a modular (additive) function (cf.~\eqref{e:w_edvws}).

\section{Conclusion}\label{s:conclude}

We presented an equivalent definition of graph reducibility based on which we further proposed a $1$-spectral clustering algorithm for submodular hypergraphs that are graph reducible, especially for those with EDVW-based splitting functions.
Through experiments on real-world datasets, we showcased the value of combining the hypergraph 1-Laplacian and EDVW.
Future research directions include:
(1) Developing computation methods for the hypergraph 1-Laplacian’s eigenvectors which can work efficiently for all submodular splitting functions,
(2) Designing multi-way partitioning algorithms based on non-linear Laplacians~\citep{buhler2009spectral}, and 
(3) Exploring applications of $p$-Laplacians for different values of $p$~\citep{fu2022p}.

\appendix
\section*{Appendix}
\renewcommand{\thesection}{\Alph{section}}
\section{Preliminary Proofs}\label{s:proof_pre}

In this section, we present some properties of submodular functions which are useful in the derivations of the proposed theorems.

\begin{lemma}[Proposition 3.7 in~\citep{bach2013learning}]
Let $F:2^\V\to\R$ be a submodular function such that $\V=[N]$ and $F(\emptyset)=0$, and denote its Lov\'asz extension by $f$, then
	\begin{equation}\label{e:F_f_equiv}
		\min_{\set\subseteq\V} F(\set) = \min_{\x\in \{0,1\}^N} f(\x) = \min_{\x\in [0,1]^N} f(\x).
	\end{equation}
Moreover, the set of minimizers of $f(\x)$ on $[0,1]^N$ is the convex hull of minimizers of $f(\x)$ on $\{0,1\}^N$.
\end{lemma}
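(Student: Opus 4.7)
The plan is to break the result into three equalities and the convex-hull characterization, all of which hinge on a single observation: for any $\x\in[0,1]^N$, the formula~\eqref{e:lovasz} expresses $f(\x)$ as a convex combination of the values $F(\set_j)$ along a chain $\emptyset=\set_0\subset\set_1\subset\cdots\subset\set_N=\V$ determined by the sort order of $\x$.

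First I would establish $\min_{\set\subseteq\V}F(\set)=\min_{\x\in\{0,1\}^N}f(\x)$ by appealing to the identity $f(\mathbf{1}_\set)=F(\set)$, which gives a value-preserving bijection between $2^\V$ and $\{0,1\}^N$. The inclusion $\{0,1\}^N\subseteq[0,1]^N$ immediately yields $\min_{[0,1]^N}f\leq\min_{\{0,1\}^N}f$, so the only work is the reverse inequality on the continuous cube.

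For that reverse inequality, I would fix $\x\in[0,1]^N$, sort its entries $x_{i_1}\geq\cdots\geq x_{i_N}$, and introduce the coefficients $\alpha_0=1-x_{i_1}$, $\alpha_j=x_{i_j}-x_{i_{j+1}}$ for $1\leq j\leq N-1$, and $\alpha_N=x_{i_N}$. Because all $\alpha_j$ are nonnegative and sum to $1$, and because $F(\set_0)=F(\emptyset)=0$, rewriting~\eqref{e:lovasz} gives $f(\x)=\sum_{j=0}^N \alpha_j F(\set_j)$, together with the coordinate identity $\x=\sum_{j=0}^N \alpha_j \mathbf{1}_{\set_j}$ (verified by checking position $i_k$, where the telescoping sum collapses to $x_{i_k}$). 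Hence $f(\x)\geq \min_{0\leq j\leq N} F(\set_j)\geq \min_{\set\subseteq\V}F(\set)$, closing the chain of equalities.

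For the convex-hull claim, one direction is standard: convexity of $f$ (equivalent to submodularity of $F$ by the cited Lov\'asz theorem) ensures that any convex combination of minimizers in $\{0,1\}^N$ attains the common minimum value on $[0,1]^N$. For the other direction I would reuse the chain decomposition: if $\x\in[0,1]^N$ is a minimizer, then $f(\x)=\sum_{j}\alpha_j F(\set_j)$ equals the minimum, so every $\set_j$ with $\alpha_j>0$ must itself be a minimizer of $F$, and the representation $\x=\sum_j\alpha_j\mathbf{1}_{\set_j}$ exhibits $\x$ as a convex combination of $\{0,1\}^N$-minimizers. The main delicate point, and the only place careful bookkeeping is needed, is handling the boundary indices $j=0$ and $j=N$ so that the weights form a genuine convex combination and the identities $F(\emptyset)=0$ and $\alpha_N F(\V)$ line up correctly with the last term of~\eqref{e:lovasz}; once the endpoint coefficients $1-x_{i_1}$ and $x_{i_N}$ are in place, everything reduces to algebra.
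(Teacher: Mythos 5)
Your argument is correct. The paper does not prove this lemma itself---it is imported verbatim as Proposition~3.7 of the cited Bach monograph---but your chain (level-set) decomposition $\x=\sum_{j=0}^{N}\alpha_j\mathbf{1}_{\set_j}$, $f(\x)=\sum_{j=0}^{N}\alpha_j F(\set_j)$ with $\alpha_0=1-x_{i_1}$, $\alpha_j=x_{i_j}-x_{i_{j+1}}$, $\alpha_N=x_{i_N}$ is exactly the standard proof of that proposition, and the endpoint bookkeeping you flag does check out: the coefficients are nonnegative on $[0,1]^N$, telescope to $1$, the $j=0$ term vanishes because $F(\emptyset)=0$, and the $j=N$ term reproduces $F(\V)x_{i_N}$ in~\eqref{e:lovasz}. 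Both halves of the convex-hull claim are also handled properly---convexity of $f$ (equivalent to submodularity) for one inclusion, and the observation that every $\set_j$ with $\alpha_j>0$ must achieve the minimum for the other---so nothing is missing.
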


In the following, we use $G$ to denote a submodular function defined on the power set of $\V\cup\bar{\V}$ where $|\V|=N$, $|\bar{\V}|=M$ and $\V\cap\bar{\V}=\emptyset$.
We assume that $G(\emptyset)=0$ and denote its Lov\'asz extension by $g(\y)=g(\x,\bar{\x})$ for any $\y=[\x^{\top},\bar{\x}^{\top}]^{\top}$ where $\x\in\R^N$ and $\bar{\x}\in\R^M$ respectively correspond to $\V$ and $\bar{\V}$.

\begin{lemma}[Proposition B.3 in~\citep{bach2013learning}]\label{lemma:partial_min}
Define a set function $F:2^{\V}\to\R$ based on $G$ as 
	\begin{equation}\label{e:F_G_2}
		F(\set) = \min_{\mathcal{T}\subseteq\bar{\V}} G(\set\cup\mathcal{T}) - \min_{\mathcal{T}\subseteq\bar{\V}} G(\mathcal{T})
	\end{equation}
for any $\set\subseteq\V$.
The function $F$ is submodular and satisfies $F(\emptyset)=0$.
If $\min_{\mathcal{T}\subseteq\bar{\V}} G(\mathcal{T}) = 0$, the Lov\'asz extension of $F$ is such that for all $\x\in\R_{\geq 0}^N$, 
	\begin{equation}\label{e:f_g_2}
		f(\x) = \min_{\bar{\x}\in\R_{\geq 0}^M} g(\x,\bar{\x}).
	\end{equation}
\end{lemma}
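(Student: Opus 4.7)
The plan is to verify three assertions in order: (i) $F(\emptyset) = 0$, (ii) submodularity of $F$, and (iii) the Lov\'asz-extension identity $f(\x) = \min_{\bar{\x} \in \R_{\geq 0}^M} g(\x, \bar{\x})$ for nonnegative $\x$. Assertion (i) is immediate since the two terms in the definition of $F(\emptyset)$ are identical.

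For (ii), I would pick arbitrary $\set_1, \set_2 \subseteq \V$ and let $\mathcal{T}_i^{\ast} \in \argmin_{\mathcal{T} \subseteq \bar{\V}} G(\set_i \cup \mathcal{T})$ achieve the minima defining $F(\set_i)$. Applying submodularity of $G$ to the pair $\set_1 \cup \mathcal{T}_1^{\ast}$ and $\set_2 \cup \mathcal{T}_2^{\ast}$, and exploiting that $\V \cap \bar{\V} = \emptyset$ so unions and intersections decompose componentwise, yields $G(\set_1 \cup \mathcal{T}_1^{\ast}) + G(\set_2 \cup \mathcal{T}_2^{\ast}) \geq G((\set_1 \cap \set_2) \cup (\mathcal{T}_1^{\ast} \cap \mathcal{T}_2^{\ast})) + G((\set_1 \cup \set_2) \cup (\mathcal{T}_1^{\ast} \cup \mathcal{T}_2^{\ast}))$. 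Bounding the right-hand side below by the partial minima over $\bar{\V}$ and then subtracting the common constant $2\min_{\mathcal{T}} G(\mathcal{T})$ delivers the submodular inequality for $F$.

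For (iii), I would use the layer-cake representation of the Lov\'asz extension: for $\x \in \R_{\geq 0}^N$ and any submodular $F$ with $F(\emptyset) = 0$, one has $f(\x) = \int_0^{\infty} F(\{v \in \V : x_v \geq t\}) \, dt$, and analogously for $g$. The ``$\geq$'' direction of (iii) is then straightforward: for any $\bar{\x} \geq \mathbf{0}$, the upper level set of $(\x, \bar{\x})$ at each level $t$ has the form $\set_t \cup \mathcal{T}_t$ with $\set_t = \{v \in \V : x_v \geq t\}$ and $\mathcal{T}_t \subseteq \bar{\V}$, so $G(\set_t \cup \mathcal{T}_t) \geq F(\set_t)$ pointwise (using $\min_{\mathcal{T}} G(\mathcal{T}) = 0$), and integrating produces $g(\x, \bar{\x}) \geq f(\x)$.

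The main obstacle is the ``$\leq$'' direction, where I must exhibit $\bar{\x} \geq \mathbf{0}$ attaining equality. Sorting $x_{i_1} \geq \cdots \geq x_{i_N}$ and writing $\set_j = \{i_1, \ldots, i_j\}$, the crux is a monotone-selection lemma: one can pick minimizers $\mathcal{T}_j^{\ast} \in \argmin_{\mathcal{T}} G(\set_j \cup \mathcal{T})$ so that $\emptyset = \mathcal{T}_0^{\ast} \subseteq \mathcal{T}_1^{\ast} \subseteq \cdots \subseteq \mathcal{T}_N^{\ast}$; the base case is valid because $G(\emptyset) = 0$ equals the assumed minimum. I would prove this by the standard exchange argument: given two minimizers with $\set_1 \subseteq \set_2$, applying submodularity of $G$ to $\set_1 \cup \mathcal{T}_1^{\ast}$ and $\set_2 \cup \mathcal{T}_2^{\ast}$ and combining with the optimality of the $\mathcal{T}_i^{\ast}$ forces equality throughout, so $\mathcal{T}_1^{\ast} \cap \mathcal{T}_2^{\ast}$ and $\mathcal{T}_1^{\ast} \cup \mathcal{T}_2^{\ast}$ are themselves valid minimizers; iterating along the chain enforces nesting. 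With the nested chain in hand, set $\bar{x}_v = x_{i_{j^{\ast}(v)}}$ where $j^{\ast}(v) = \min\{j : v \in \mathcal{T}_j^{\ast}\}$ (and $\bar{x}_v = 0$ if $v$ never enters). Then the upper level set of $(\x, \bar{\x})$ at level $t$ is exactly $\set_{j(t)} \cup \mathcal{T}_{j(t)}^{\ast}$ for the appropriate index, and $G(\set_j \cup \mathcal{T}_j^{\ast}) = F(\set_j)$ by construction. Integrating over $t$ recovers $g(\x, \bar{\x}) = f(\x)$ and closes the proof.
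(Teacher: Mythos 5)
Your proof is correct, but note that the paper does not actually prove this lemma: it is imported verbatim as Proposition B.3 of \citet{bach2013learning}, so there is no in-paper argument to compare against, and what you have written is essentially the standard self-contained proof of that proposition. All three steps check out. The exchange argument for submodularity works because $\V\cap\bar{\V}=\emptyset$ makes unions and intersections split coordinatewise, after which you bound below by the partial minima and cancel the constant $\min_{\mathcal{T}}G(\mathcal{T})$. The ``$\geq$'' direction via the layer-cake identity $f(\x)=\int_0^\infty F(\{v:x_v\ge t\})\,dt$ is legitimate here because $\x\ge \mathbf{0}$, $F(\emptyset)=0$, and $G(\emptyset)=0$. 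The ``$\leq$'' direction is the only place where real care is needed, and your monotone-selection argument is right: for $\set_1\subseteq\set_2$, submodularity of $G$ gives $G(\set_1\cup\mathcal{T}_1^{\ast})+G(\set_2\cup\mathcal{T}_2^{\ast})\ge G(\set_1\cup(\mathcal{T}_1^{\ast}\cap\mathcal{T}_2^{\ast}))+G(\set_2\cup(\mathcal{T}_1^{\ast}\cup\mathcal{T}_2^{\ast}))$, and optimality of $\mathcal{T}_1^{\ast}$ and $\mathcal{T}_2^{\ast}$ forces both to be equalities, so $\mathcal{T}_1^{\ast}\cup\mathcal{T}_2^{\ast}$ is again a minimizer for $\set_2$; iterating from $\mathcal{T}_0^{\ast}=\emptyset$ (valid since $G(\emptyset)=0=\min_{\mathcal{T}}G(\mathcal{T})$) yields the nested chain. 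Two small points worth making explicit in a final write-up, neither of which is a gap: (i) $j^{\ast}(v)\ge 1$ for every $v\in\mathcal{T}_N^{\ast}$ because $\mathcal{T}_0^{\ast}=\emptyset$, so $\bar{x}_v=x_{i_{j^{\ast}(v)}}$ is well defined and nonnegative; (ii) ties $x_{i_j}=x_{i_{j+1}}$ cause no trouble because the corresponding $t$-intervals in the layer-cake integral are empty, so the level set of $(\x,\bar{\x})$ equals $\set_j\cup\mathcal{T}_j^{\ast}$ for almost every $t$, which is all the integral representation requires.
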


\begin{lemma}\label{lemma:partial_equiv}
Further assume that the submodular function $G$ is non-negative and satisfies $G(\V\cup\bar{\V})=0$.
Given some $\x\in\R^N$ whose minimum element and maximum element are respectively $a$ and $b$, the inequality 
	\begin{equation}
		g(\x,\bar{\x}) \geq g(\x,\mathcal{P}_{a,b}(\bar{\x}))
	\end{equation}
holds for any $\bar{\x}\in\R^M$.
This implies that 
	\begin{equation}
		\min_{\bar{\x}\in\R^M} g(\x,\bar{\x}) = \min_{\bar{\x}\in [a', b']^M} g(\x,\bar{\x}) = \min_{\bar{\x}\in [a, b]^M} g(\x,\bar{\x})
	\end{equation}
for any $a'<a$ and $b'>b$.
\end{lemma}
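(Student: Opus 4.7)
The plan is to establish the pointwise inequality $g(\x,\bar{\x}) \geq g(\x,\mathcal{P}_{a,b}(\bar{\x}))$ first, and then derive the chain of minima equalities as a routine corollary. The main tool I would use is the level-set (``layer cake'') representation of the Lov\'asz extension, which under the standing hypotheses $G(\emptyset) = G(\V\cup\bar{\V}) = 0$ takes the particularly clean form
\begin{equation*}
g(\y) \;=\; \int_{-\infty}^{\infty} G\bigl(\{v \in \V\cup\bar{\V} : y_v \geq t\}\bigr) \, dt.
\end{equation*}
This identity follows from~\eqref{e:lovasz}: on each sorting interval $(y_{i_{j+1}}, y_{i_j}]$ the super-level set is exactly $\set_j$, so the integrand reproduces the defining sum; the two boundary terms drop out thanks to $G(\emptyset) = G(\V\cup\bar{\V}) = 0$, which also makes the integral finite.

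Next I would write $\y = (\x,\bar{\x})$ and $\y' = (\x, \mathcal{P}_{a,b}(\bar{\x}))$ and compare the super-level sets $S_t(\y)$ and $S_t(\y')$ in three regimes. For $t \in (a,b]$, a short case analysis on coordinates in $\V$, on $k$ with $\bar{x}_k > b$, with $\bar{x}_k < a$, and with $\bar{x}_k \in [a,b]$ shows $S_t(\y) = S_t(\y')$, because $t$ separates $\bar{x}_k$ from $b$ (and from $a$) in exactly the same way as it separates $\mathcal{P}_{a,b}(\bar{x}_k)$. For $t > b$, every entry of $\y'$ lies in $[a,b]$, so $S_t(\y') = \emptyset$ while $S_t(\y) \subseteq \{k : \bar{x}_k > b\}$ may be nonempty. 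Symmetrically, for $t < a$ every entry of $\y'$ is $\geq a > t$, so $S_t(\y') = \V\cup\bar{\V}$ while $S_t(\y)$ drops the low coordinates $\{k : \bar{x}_k < t\}$ of $\bar{\x}$.

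Plugging these comparisons into the integral difference,
\begin{equation*}
g(\y) - g(\y') \;=\; \int_{t<a} \bigl[G(S_t(\y)) - 0\bigr]\,dt \;+\; \int_{(a,b]} 0 \, dt \;+\; \int_{t>b} \bigl[G(S_t(\y)) - 0\bigr]\,dt,
\end{equation*}
where the outer zeros use $G(\emptyset) = G(\V\cup\bar{\V}) = 0$ and the middle integrand vanishes by the level-set coincidence. Non-negativity of $G$ then yields $g(\y) \geq g(\y')$, while the finitely many $t$ at which ties occur contribute nothing to the Lebesgue integral.

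For the minima equalities, the pointwise inequality gives $\min_{\bar{\x}\in\R^M} g(\x,\bar{\x}) \geq \min_{\bar{\x}\in[a,b]^M} g(\x,\bar{\x})$ (projection is always admissible and never hurts), while the inclusions $[a,b]^M \subseteq [a',b']^M \subseteq \R^M$ give the reverse sandwich, forcing all three minima to coincide. I expect the only subtle point to be making the level-set identity rigorous across ties in the entries of $\y$, but since those ties form a finite set in $t$ they are irrelevant to the integral; beyond that the argument is direct.
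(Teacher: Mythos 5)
Your proof is correct: the layer-cake identity $g(\y)=\int_{-\infty}^{\infty}G(\{v: y_v\geq t\})\,dt$ is valid under $G(\emptyset)=G(\V\cup\bar{\V})=0$, the level-set comparison in the three regimes is right, and the sandwich argument for the minima is routine. The paper omits this proof entirely (declaring it immediate from the definition of the Lov\'asz extension), and your argument is exactly the natural way to make that claim rigorous, so there is nothing to flag.
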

\begin{proof}
The proof follows immediately from the definition of the Lov\'asz extension and thus is omitted here.
\end{proof}

\begin{lemma}\label{lemma:partial_min_extend}
Following Lemma~\ref{lemma:partial_min}, if $G$ further satisfies the conditions listed in Lemma~\ref{lemma:partial_equiv}, then the Lov\'asz extension of $F$ is such that for all $\x\in\R^N$,
	\begin{equation}
		f(\x) = \min_{\bar{\x}\in\R^M} g(\x,\bar{\x}).
	\end{equation}
\end{lemma}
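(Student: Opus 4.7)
The plan is to bootstrap Lemma~\ref{lemma:partial_min}, which already gives the desired formula when $\x\in\R_{\geq 0}^N$, by using translation invariance to handle arbitrary $\x\in\R^N$. The extra hypotheses $G\geq 0$ and $G(\V\cup\bar{\V})=0$ are exactly what make both Lov\'asz extensions $f$ and $g$ invariant under adding a constant to all coordinates, and this is the mechanism that should transport the non-negative case to the general one.

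First I would verify the translation invariance hypotheses. The property recalled in Section~\ref{ss:math_pre} says $f(\x+b\mathbf{1})=f(\x)$ whenever $F(\V)=0$, and analogously for $g$ when $G(\V\cup\bar{\V})=0$. The latter is given. For the former, combining $G\geq 0$ with $G(\V\cup\bar{\V})=0$ yields $\min_{\mathcal{T}\subseteq\bar{\V}}G(\V\cup\mathcal{T})=0$ (attained at $\mathcal{T}=\bar{\V}$), while $G\geq 0$ and $G(\emptyset)=0$ give $\min_{\mathcal{T}\subseteq\bar{\V}}G(\mathcal{T})=0$ (attained at $\mathcal{T}=\emptyset$). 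Subtracting these in the definition~\eqref{e:F_G_2} gives $F(\V)=0$. The same identity $\min_{\mathcal{T}\subseteq\bar{\V}}G(\mathcal{T})=0$ is also the standing assumption required by Lemma~\ref{lemma:partial_min}, so that lemma remains available.

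Next, given any $\x\in\R^N$, let $a=\min_v x_v$ and $b=\max_v x_v$, and pick any $c\geq -a$ so that $\x+c\mathbf{1}\in\R_{\geq 0}^N$. Applying translation invariance of $f$ and then Lemma~\ref{lemma:partial_min},
\[
f(\x)=f(\x+c\mathbf{1})=\min_{\bar{\x}'\in\R_{\geq 0}^M} g(\x+c\mathbf{1},\bar{\x}').
\]
Substituting $\bar{\x}'=\bar{\x}+c\mathbf{1}$ makes $\bar{\x}$ range over $[-c,\infty)^M$, and translation invariance of $g$ applied to the concatenated vector $[\x^{\top},\bar{\x}^{\top}]^{\top}+c\mathbf{1}$ rewrites the right-hand side as $\min_{\bar{\x}\in[-c,\infty)^M} g(\x,\bar{\x})$.

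It remains to widen the feasible set from $[-c,\infty)^M$ to $\R^M$ without changing the minimum. Because $c\geq -a$, we have the chain $[a,b]^M\subseteq[-c,\infty)^M\subseteq\R^M$, and Lemma~\ref{lemma:partial_equiv} asserts that the unconstrained minimum of $g(\x,\cdot)$ over $\R^M$ is already attained inside $[a,b]^M$. Sandwiching forces all three minima to coincide, which combined with the previous display yields $f(\x)=\min_{\bar{\x}\in\R^M}g(\x,\bar{\x})$. The only delicate step is this last one: one has to choose $c$ large enough to push $\x$ into the non-negative orthant (so Lemma~\ref{lemma:partial_min} applies), yet ensure the resulting half-space $[-c,\infty)^M$ still contains the box $[a,b]^M$ so that Lemma~\ref{lemma:partial_equiv} can close the gap back to $\R^M$; the inequality $c\geq -a$ achieves both simultaneously. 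Everything else is routine bookkeeping with the two translation identities.
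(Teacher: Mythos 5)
Your proof is correct and follows essentially the same route as the paper's: translate $\x$ into the non-negative orthant, apply Lemma~\ref{lemma:partial_min}, shift the auxiliary variable back via translation invariance of $g$, and invoke Lemma~\ref{lemma:partial_equiv} to widen the inner minimization from a half-bounded box to all of $\R^M$ (the paper simply fixes $c=-a$ rather than an arbitrary $c\geq -a$). Your explicit verification that $F(\V)=0$, so that $f$ is indeed translation invariant, is a detail the paper leaves implicit and is a welcome addition.
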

\begin{proof}
Given some $\x\in\R^N$ whose minimum element is $a$, we have
\begin{equation*}
	f(\x) 
	= f(\x-a\mathbf{1}) 
	\overset{(a)}{=} \min_{\bar{\x}\in\R_{\geq 0}^M} g(\x-a\mathbf{1},\bar{\x}) 
	= \min_{\bar{\x}\in\R_{\geq 0}^M} g(\x,\bar{\x}+a\mathbf{1}) 
	= \min_{\bar{\x}\in [a,\infty)^M} g(\x,\bar{\x}) 
	\overset{(b)}{=} \min_{\bar{\x}\in\R^M} g(\x,\bar{\x})
\end{equation*}
where $(a)$ followed~\eqref{e:f_g_2} and (b) followed Lemma~\ref{lemma:partial_equiv}.
\end{proof}

\begin{lemma}\label{lemma:partial}
Define a set function $F:2^{\bar{\V}}\to\R$ based on $G$ as 
	\begin{equation}\label{e:F_G_1}
		F(\mathcal{T}) = G(\set\cup\mathcal{T}) - G(\set)
	\end{equation}
for any $\mathcal{T}\subseteq\bar{\V}$.
The function $F$ is submodular and satisfies $F(\emptyset)=0$.
The Lov\'asz extension of $F$ is such that for all $\bar{\x}\in [0,1]^M$, 
	\begin{equation}\label{e:f_g_1}
		f(\bar{\x}) = g(\mathbf{1}_{\set},\bar{\x}) - G(\set).
	\end{equation}
\end{lemma}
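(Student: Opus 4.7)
\textbf{Proof proposal for Lemma~\ref{lemma:partial}.}

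The first two assertions are immediate. I would first observe that $F(\emptyset)=G(\set)-G(\set)=0$. For submodularity, pick any $\mathcal{T}_1,\mathcal{T}_2\subseteq\bar{\V}$ and use that $\set\subseteq\V$ is disjoint from $\bar{\V}$, so $(\set\cup\mathcal{T}_1)\cup(\set\cup\mathcal{T}_2)=\set\cup(\mathcal{T}_1\cup\mathcal{T}_2)$ and $(\set\cup\mathcal{T}_1)\cap(\set\cup\mathcal{T}_2)=\set\cup(\mathcal{T}_1\cap\mathcal{T}_2)$. Applying submodularity of $G$ on the left-hand side and subtracting $2G(\set)$ from both sides then yields $F(\mathcal{T}_1)+F(\mathcal{T}_2)\geq F(\mathcal{T}_1\cup\mathcal{T}_2)+F(\mathcal{T}_1\cap\mathcal{T}_2)$.

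The substantive part is the identification of the Lov\'asz extension. I plan to argue directly from the definition in~\eqref{e:lovasz}. Fix $\bar{\x}\in[0,1]^M$ and a permutation $(j_1,\dots,j_M)$ of $\bar{\V}$ for which $\bar{x}_{j_1}\geq\cdots\geq\bar{x}_{j_M}$, and let $\mathcal{T}_k=\{j_1,\dots,j_k\}$. Substituting $F(\mathcal{T}_k)=G(\set\cup\mathcal{T}_k)-G(\set)$ into~\eqref{e:lovasz} and telescoping the $G(\set)$ contributions gives
\begin{equation*}
f(\bar{\x}) \;=\; \sum_{k=1}^{M-1} G(\set\cup\mathcal{T}_k)\,(\bar{x}_{j_k}-\bar{x}_{j_{k+1}}) \;+\; G(\set\cup\bar{\V})\,\bar{x}_{j_M} \;-\; G(\set)\,\bar{x}_{j_1}.
\end{equation*}

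I would then evaluate $g(\mathbf{1}_\set,\bar{\x})$ by applying~\eqref{e:lovasz} to the vector $\y=[\mathbf{1}_\set^{\top},\bar{\x}^{\top}]^{\top}\in\R^{N+M}$. Because $\mathbf{1}_\set$ takes values only in $\{0,1\}$ and $\bar{\x}\in[0,1]^M$, a valid sorting of $\y$ in non-increasing order lists first the vertices of $\set$ (all equal to $1$), then $j_1,\dots,j_M$ from $\bar{\V}$, then the vertices of $\V\setminus\set$ (all equal to $0$). Within the first and third blocks the successive differences vanish, so the only contributions come from the three boundary jumps: between $\set$ and $j_1$ (giving $G(\set)(1-\bar{x}_{j_1})$), between consecutive $j_k$'s (giving $\sum_{k=1}^{M-1} G(\set\cup\mathcal{T}_k)(\bar{x}_{j_k}-\bar{x}_{j_{k+1}})$), and between $j_M$ and $\V\setminus\set$ (giving $G(\set\cup\bar{\V})\bar{x}_{j_M}$). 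Comparing with the expression for $f(\bar{\x})$ above, the match is exact and $g(\mathbf{1}_\set,\bar{\x})=G(\set)+f(\bar{\x})$.

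The one subtlety, which I expect to be the only real obstacle, is handling ties: if some $\bar{x}_v$ equals $1$, it could be sorted among the $\set$-block, and similarly for $\bar{x}_v=0$ with the $(\V\setminus\set)$-block. I would dispose of this by noting that the Lov\'asz extension is well-defined independently of the tie-breaking rule because terms with equal successive values contribute zero; equivalently, both sides of~\eqref{e:f_g_1} are continuous (piecewise linear) in $\bar{\x}$, so the identity proved on the open cube $(0,1)^M$ extends to the closed cube $[0,1]^M$ by continuity.
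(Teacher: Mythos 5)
Your proposal is correct and follows essentially the same route as the paper: verify $F(\emptyset)=0$ and submodularity via the set identities $(\set\cup\mathcal{T}_1)\cup(\set\cup\mathcal{T}_2)=\set\cup(\mathcal{T}_1\cup\mathcal{T}_2)$, etc., then expand $f(\bar{\x})$ from the definition~\eqref{e:lovasz}, telescope the $-G(\set)$ terms to get $-G(\set)\bar{x}_{j_1}$, and recognize the result as the Lov\'asz extension of $G$ evaluated at the concatenated vector $[\mathbf{1}_\set^{\top},\bar{\x}^{\top}]^{\top}$ minus $G(\set)$. Your explicit treatment of the block ordering of the sorted concatenated vector and of ties is slightly more careful than the paper's, but the argument is the same.
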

\begin{proof}
For any $\mathcal{T}_1,\mathcal{T}_2\subseteq\bar{\V}$, one has
\begin{align*}
	& F(\mathcal{T}_1\cup\mathcal{T}_2) + F(\mathcal{T}_1\cap\mathcal{T}_2) \\
	\overset{(a)}{=} & G(\set\cup[\mathcal{T}_1\cup\mathcal{T}_2]) + G(\set\cup[\mathcal{T}_1\cap\mathcal{T}_2]) - 2G(\set) \\
	\overset{(b)}{=} & G([\set\cup\mathcal{T}_1]\cup[\set\cup\mathcal{T}_2]) + G([\set\cup\mathcal{T}_1]\cap[\set\cup\mathcal{T}_2]) - 2G(\set) \\
	\overset{(c)}{\leq} & G(\set\cup\mathcal{T}_1) + G(\set\cup\mathcal{T}_2) - 2G(\set) \\
	\overset{(d)}{=} & F(\mathcal{T}_1) + F(\mathcal{T}_2)
\end{align*}
where $(a)$ and $(d)$ followed~\eqref{e:F_G_1}, $(b)$ leveraged properties of set operations, and $(c)$ was from the submodularity of the function $G$. 
Hence, the function $F$ is proved to be submodular.

For any $\bar{\x}\in [0,1]^M$, sort its entries in non-increasing order $1 \geq \bar{x}_{i_1} \geq \bar{x}_{i_2} \geq \cdots \geq \bar{x}_{i_M} \geq 0$ and define $\mathcal{T}_j = \{i_1,\cdots,i_j\}$ for $1 \leq j < M$, then we can write
\begin{align*}
	f(\bar{\x}) 
	&\overset{(e)}{=} \sum\nolimits_{j=1}^{M-1} F(\mathcal{T}_j)(\bar{x}_{i_j}-\bar{x}_{i_{j+1}}) + F(\bar{\V})\bar{x}_{i_M}  \\
	&\overset{(f)}{=} \sum\nolimits_{j=1}^{M-1} \left[G(\set\cup\mathcal{T}_j)-G(\set)\right](\bar{x}_{i_j}-\bar{x}_{i_{j+1}}) + \left[G(\set\cup\bar{\V})-G(\set)\right]\bar{x}_{i_M}  \\
	&= -G(\set)\bar{x}_{i_1} + \sum\nolimits_{j=1}^{M-1} G(\set\cup\mathcal{T}_j)(\bar{x}_{i_j}-\bar{x}_{i_{j+1}}) + G(\set\cup\bar{\V})\bar{x}_{i_M} \\
	&= G(\set)(1-\bar{x}_{i_1}) + \sum\nolimits_{j=1}^{M-1} G(\set\cup\mathcal{T}_j)(\bar{x}_{i_j}-\bar{x}_{i_{j+1}}) + G(\set\cup\bar{\V})(\bar{x}_{i_M}-0) - G(\set) \\
	&\overset{(g)}{=} g(\mathbf{1}_{\set},\bar{\x}) - G(\set)
\end{align*} 
where $(e)$ and $(g)$ were obtained according to the definition of the Lov\'asz extension, and $(f)$ followed~\eqref{e:F_G_1}, thus~\eqref{e:f_g_1} is proved. 
\end{proof}


\section{Proof of Theorem~\ref{THM:REDUCIBLE}}\label{s:proof_reducible}

We will prove the equivalence between~\eqref{e:reduce1} and~\eqref{e:reduce2}.

Prove \eqref{e:reduce1} $\to$ \eqref{e:reduce2}:
The proof follows Lemmas~\ref{lemma:partial_min} and~\ref{lemma:partial_min_extend}.

Prove \eqref{e:reduce2} $\to$ \eqref{e:reduce1}:
We define a set function $F:2^{\bar{\V}}\to\R$ as 
	\begin{equation}\label{e:F_auxi}
		F(\mathcal{T}) = \cut_\G(\set\cup\mathcal{T}) - \cut_\G(\set)
	\end{equation}
for any $\mathcal{T}\subseteq\bar{\V}$.
According to Lemma~\ref{lemma:partial}, the function $F$ is submodular and its Lov\'asz extension $f$ is such that for all $\bar{\x}\in [0,1]^M$,
	\begin{equation}\label{e:f_auxi}
		f(\bar{\x}) = Q_1^{(g)}(\mathbf{1}_{\set},\bar{\x}) - \cut_\G(\set).
	\end{equation}
We can write
\begin{align*}
	\min_{\mathcal{T}\subseteq\bar{\V}} \cut_\G(\set\cup\mathcal{T}) 
	&\overset{(a)}{=} \min_{\mathcal{T}\subseteq\bar{\V}} F(\mathcal{T}) + \cut_\G(\set) \\
	&\overset{(b)}{=} \min_{\bar{\x}\in [0,1]^M} f(\bar{\x}) + \cut_\G(\set) \\
	&\overset{(c)}{=} \min_{\bar{\x}\in [0,1]^M} Q_1^{(g)}(\mathbf{1}_{\set},\bar{\x}) \\
	&\overset{(d)}{=} \min_{\bar{\x}\in \R^M} Q_1^{(g)}(\mathbf{1}_{\set},\bar{\x}) \\
	&\overset{(e)}{=} Q_1(\mathbf{1}_{\set}) = \cut_{\hg}(\set)
\end{align*}
where $(a)$ followed~\eqref{e:F_auxi}, $(b)$ followed~\eqref{e:F_f_equiv}, $(c)$ followed~\eqref{e:f_auxi}, $(d)$ followed Lemma~\ref{lemma:partial_equiv}, and $(e)$ followed~\eqref{e:reduce2}, thus~\eqref{e:reduce1} is obtained.

\section{Proof of Theorem~\ref{THM:INNER_EQUIV}}\label{s:proof_inner_equiv}

Notice that the choice of the norm in the inner problem of Algorithm~\ref{alg:ipm} only influences the scale of the solution.
If we select the infinity norm, then we have
\begin{align*}
	& \min_{\|\x\|_{\infty}\leq 1} Q_1(\x) - \lambda\langle\x,\pmb{g}\rangle \\
	\overset{(a)}{=} & \min_{\|\x\|_{\infty}\leq 1} \min_{\bar{\x}\in\R^M} Q_1^{(g)}(\x, \bar{\x}) - \lambda\langle\x,\pmb{g}\rangle \\
	= & \min_{\bar{\x}\in\R^M}\min_{\|\x\|_{\infty}\leq 1} Q_1^{(g)}(\x, \bar{\x}) - \lambda\langle\x,\pmb{g}\rangle \\
	\overset{(b)}{=} & \min_{\|\y\|_{\infty}\leq 1} Q_1^{(g)}(\y) - \langle\y,\tilde{\pmb{g}}\rangle 
\end{align*}
where $(a)$ followed \eqref{e:reduce2} and $(b)$ followed Lemma~\ref{lemma:partial_equiv} where we rewrite $Q_1^{(g)}(\y)=Q_1^{(g)}(\x, \bar{\x})$ for $\y=[\x^{\top},\bar{\x}^{\top}]^{\top}$ and $\tilde{\pmb{g}}=[\lambda\pmb{g}^{\top}, \mathbf{0}_{1\times M}]^{\top}$.
Then we replace the infinity norm with the Euclidean norm which only influences the scale of the solution.


\section{Proof of Theorem~\ref{THM:INNER_DUAL}}\label{s:proof_inner_dual}

According to the Lov\'asz extension of the cut function of a digraph, $Q_1^{(g)}(\y)$ can be rewritten as
\begin{align*}
Q_1^{(g)}(\y) 
&= \sum\nolimits_{u,v\in\V_{\G}} A_{uv} \max\{y_u-y_v, 0\} \\
&= \max_{\substack{\pmb{\alpha}\in [0,1]^m \\ \alpha_{uv}+\alpha_{vu}=1}} \sum\nolimits_{(u,v) \in \tilde{\E}_{\G}} A_{uv} (y_u-y_v) \alpha_{uv} \\
&= \max_{\substack{\pmb{\alpha}\in [0,1]^m \\ \alpha_{uv}+\alpha_{vu}=1}} \sum\nolimits_{(u,v) \in \tilde{\E}_{\G}} (A_{uv}\alpha_{uv}-A_{vu}\alpha_{vu}) y_u \\
&= \max_{\substack{\pmb{\alpha}\in [0,1]^m \\ \alpha_{uv}+\alpha_{vu}=1}} \sum\nolimits_{u\in\V_{\G}} y_u \left( \sum\nolimits_{v \,|\, (u,v) \in \tilde{\E}_{\G}} A_{uv}\alpha_{uv}-A_{vu}\alpha_{vu} \right) \\
&= \max_{\substack{\pmb{\alpha}\in [0,1]^m \\ \alpha_{uv}+\alpha_{vu}=1}} \langle \y, f_A(\pmb{\alpha}) \rangle
\end{align*}
It follows that
\begin{align*}
  & \min_{\|\y\|_2\leq 1} Q_1^{(g)}(\y) - \langle\y,\tilde{\pmb{g}}\rangle \\
= & \min_{\|\y\|_2\leq 1} \max_{\substack{\pmb{\alpha}\in [0,1]^m \\ \alpha_{uv}+\alpha_{vu}=1}} \langle \y, f_A(\pmb{\alpha}) \rangle - \langle\y,\tilde{\pmb{g}}\rangle \\
\overset{(a)}{=} & \max_{\substack{\pmb{\alpha}\in [0,1]^m \\ \alpha_{uv}+\alpha_{vu}=1}} \min_{\|\y\|_2\leq 1} \langle \y, f_A(\pmb{\alpha}) \rangle - \langle\y,\tilde{\pmb{g}}\rangle \\
= & \max_{\substack{\pmb{\alpha}\in [0,1]^m \\ \alpha_{uv}+\alpha_{vu}=1}} \min_{\|\y\|_2\leq 1} \langle \y, f_A(\pmb{\alpha}) - \tilde{\pmb{g}}\rangle \\
\overset{(b)}{=} & \max_{\substack{\pmb{\alpha}\in [0,1]^m \\ \alpha_{uv}+\alpha_{vu}=1}} -\| f_A(\pmb{\alpha}) - \tilde{\pmb{g}} \|_2 
\end{align*}
where $(a)$ followed Corollary 37.3.2 in~\citep{rockafellar1970convex}, and in $(b)$ we used the solution to the minimization of the linear function over the Euclidean unit ball given by \eqref{e:linear_unit_ball}.
Hence, the dual problem~\eqref{e:inner_equiv_2} is derived.

We rewrite the objective function $\Psi(\pmb{\alpha})$ of the dual problem as
\begin{small}
\begin{equation*}
\Psi(\pmb{\alpha}) = \sum\nolimits_{u\in\V_\G} \left( \left( \sum\nolimits_{v \,|\, (u,v) \in \tilde{\E}_\G} A_{uv}\alpha_{uv} - A_{vu}\alpha_{vu} \right) - \tilde{g}_u \right)^2. 
\end{equation*}
\end{small}%
It can be observed that the terms in $\Psi(\pmb{\alpha})$ involving a specific $\alpha_{ij}$ are
\begin{small}
\begin{align*}
& \left( \left( \sum\nolimits_{v \,|\, (i,v) \in \tilde{\E}_\G} A_{iv}\alpha_{iv} - A_{vi}\alpha_{vi} \right) - \tilde{g}_i \right)^2 + \left( \left( \sum\nolimits_{v \,|\, (j,v) \in \tilde{\E}_\G} A_{jv}\alpha_{jv} - A_{vj}\alpha_{vj} \right) - \tilde{g}_j \right)^2 \\
= & \left( \left( \sum\nolimits_{v \,|\, (i,v) \in \tilde{\E}_\G} (A_{iv}+A_{vi})\alpha_{iv}-A_{vi} \right) - \tilde{g}_i \right)^2 + \left( \left( \sum\nolimits_{v \,|\, (j,v) \in \tilde{\E}_\G} A_{jv}-(A_{jv}+A_{vj})\alpha_{vj} \right) - \tilde{g}_j \right)^2
\end{align*}
\end{small}%
where we replaced $\alpha_{vi}$ and $\alpha_{jv}$ with $1-\alpha_{iv}$ and $1-\alpha_{vj}$, respectively.
It follows that 
\begin{small}
\begin{equation*} 
\frac{\partial \Psi(\pmb{\alpha})}{\partial \alpha_{ij}} = 2(A_{ij}+A_{ji}) \left( \left( \sum\nolimits_{v \,|\, (i,v) \in \tilde{\E}_\G} (A_{iv}+A_{vi})\alpha_{iv}-A_{vi} \right) - \tilde{g}_i + \left( \sum\nolimits_{v \,|\, (j,v) \in \tilde{\E}_\G} (A_{jv}+A_{vj})\alpha_{vj}-A_{jv} \right) + \tilde{g}_j \right).   
\end{equation*}
\end{small}%
Thus, we have
\begin{small}
\begin{equation*} 
\frac{\partial \Psi(\pmb{\alpha})}{\partial \alpha_{ij}} - \frac{\partial \Psi(\pmb{\beta})}{\partial \beta_{ij}} 
= 2(A_{ij}+A_{ji}) \left( \sum\nolimits_{v \,|\, (i,v) \in \tilde{\E}_\G} (A_{iv}+A_{vi})(\alpha_{iv}-\beta_{iv}) + \sum\nolimits_{v \,|\, (j,v) \in \tilde{\E}_\G} (A_{jv}+A_{vj})(\alpha_{vj}-\beta_{vj}) \right).   
\end{equation*}
\end{small}%
It follows that 
\begin{small}
\begin{align*}  
& \|\nabla\Psi(\pmb{\alpha}) - \nabla\Psi(\pmb{\beta})\|_2^2 
= \sum_{\substack{(i,j)\in\tilde{\E}_\G \\ i<j}} \left(\frac{\partial \Psi(\pmb{\alpha})}{\partial \alpha_{ij}} - \frac{\partial \Psi(\pmb{\beta})}{\partial \beta_{ij}} \right)^2\\
&= 4\sum_{\substack{(i,j)\in\tilde{\E}_\G \\ i<j}}(A_{ij}+A_{ji})^2 \left( \sum_{v \,|\, (i,v) \in \tilde{\E}_\G} (A_{iv}+A_{vi})(\alpha_{iv}-\beta_{iv}) + \sum_{v \,|\, (j,v) \in \tilde{\E}_\G} (A_{jv}+A_{vj})(\alpha_{vj}-\beta_{vj}) \right)^2 \\
&\overset{(a)}{\leq} 8\sum_{\substack{(i,j)\in\tilde{\E}_\G \\ i<j}}(A_{ij}+A_{ji})^2 \left( \left(\sum_{v \,|\, (i,v) \in \tilde{\E}_\G} (A_{iv}+A_{vi})(\alpha_{iv}-\beta_{iv})\right)^2 + \left(\sum_{v \,|\, (j,v) \in \tilde{\E}_\G} (A_{jv}+A_{vj})(\alpha_{vj}-\beta_{vj})\right)^2 \right) \\
&\overset{(b)}{\leq} 8\sum_{\substack{(i,j)\in\tilde{\E}_\G \\ i<j}}(A_{ij}+A_{ji})^2 \left( \sum_{v \,|\, (i,v) \in \tilde{\E}_\G} (A_{iv}+A_{vi})^2 \!\!\!\! \sum_{v \,|\, (i,v) \in \tilde{\E}_\G} (\alpha_{iv}-\beta_{iv})^2  + \!\!\!\! \sum_{v \,|\, (j,v) \in \tilde{\E}_\G} (A_{jv}+A_{vj})^2 \!\!\!\! \sum_{v \,|\, (j,v) \in \tilde{\E}_\G} (\alpha_{jv}-\beta_{jv})^2 \right) \\
&= 8\sum_{(i,j)\in\tilde{\E}_\G}(A_{ij}+A_{ji})^2 \left( \sum_{v \,|\, (i,v) \in \tilde{\E}_\G} (A_{iv}+A_{vi})^2 \sum_{v \,|\, (i,v) \in \tilde{\E}_\G} (\alpha_{iv}-\beta_{iv})^2  \right) \\
&= 8\sum_{i\in\V_\G} \left( \sum_{v \,|\, (i,v) \in \tilde{\E}_\G} (A_{iv}+A_{vi})^2 \right)^2 \sum_{v \,|\, (i,v) \in \tilde{\E}_\G} (\alpha_{iv}-\beta_{iv})^2 \\
&\leq 8\left( \max_{i\in\V_\G} \sum_{v \,|\, (i,v) \in \tilde{\E}_\G} (A_{iv}+A_{vi})^2 \right)^2 \sum_{(i,v) \in \tilde{\E}_\G} (\alpha_{iv}-\beta_{iv})^2 \\
&= 16\left( \max_{i\in\V_\G} \sum_{v \,|\, (i,v) \in \tilde{\E}_\G} (A_{iv}+A_{vi})^2 \right)^2 \|\pmb{\alpha}-\pmb{\beta}\|_2^2    
\end{align*}
\end{small}%
where $(a)$ leveraged the inequality $(a+b)^2 \leq 2(a^2+b^2)$ and $(b)$ followed the Cauchy-Schwarz inequality.
An upper bound on the Lipschitz constant of $\nabla\Psi$ is thus obtained.


%

%

\section*{Funding}

This work was supported by NSF under award CCF 2008555.



\section*{Data Availability Statement}
The data and code for this study are available at~\url{https://github.com/yuzhu2019/hg_edvws_1spectralclustering}.


\bibliographystyle{Frontiers-Harvard} 
\bibliography{references.bib}




\end{document}